\newcommand{\mE}{\mathbb{E}}
\DeclareMathOperator*{\argmin}{arg\,min}
\newtheorem{lemma}{Lemma}
\newtheorem{corollary}{Corollary}
\newcommand\T{\rule{0pt}{2.6ex}}       
\newcommand\B{\rule[-1.2ex]{0pt}{0pt}} 
\title{An Efficient Minibatch Acceptance Test for Metropolis-Hastings}
\author{
Daniel Seita$^1$, 
Xinlei Pan$^1$,
Haoyu Chen$^1$,
John Canny$^{1,2}$ \\
$^1$ University of California, Berkeley, CA \\
$^2$ Google Research, Mountain View, CA\\ 
\texttt{\{seita,xinleipan,haoyuchen,canny\}@berkeley.edu}
}
\begin{document}

\maketitle

\begin{abstract}
  We present a novel Metropolis-Hastings method for large datasets that
  uses small expected-size minibatches of data. Previous work on
  reducing the cost of Metropolis-Hastings tests yield variable data
  consumed per sample, with only constant factor reductions versus
  using the full dataset for each sample.  Here we present a method
  that can be tuned to provide arbitrarily small batch sizes, by
  adjusting either proposal step size or temperature. Our test uses
  the noise-tolerant Barker acceptance test with a novel additive
  correction variable.  The resulting test has similar cost to a normal
  SGD update. Our experiments demonstrate several order-of-magnitude
  speedups over previous work.
\end{abstract}

\section{INTRODUCTION}\label{sec:introduction}

Markov chain Monte Carlo (MCMC) sampling is a powerful method for computation on
intractable distributions. We are interested in large dataset applications,
where the goal is to sample a posterior distribution $p(\theta | x_1, \ldots,
x_N)$ of parameter $\theta$ for large $N$.  The Metropolis-Hastings method (M-H)
generates sample candidates from a proposal distribution $q$ which is in general
different from the target distribution $p$, and decides whether to accept or
reject based on an acceptance test. The acceptance test is usually a Metropolis
test~\citep{Metropolis1953, hastings70}.

Many state-of-the-art machine learning methods, and deep learning in particular,
are based on minibatch updates (such as SGD) to a model.  Minibatch updates
produce many improvements to the model for each pass over the dataset, and have
high sample efficiency.  In contrast, conventional M-H requires calculations
over the full dataset to produce a new sample.  Recent results
from~\citep{cutting_mh_2014} and~\citep{icml2014c1_bardenet14} perform
approximate (bounded error) acceptance tests using subsets of the full dataset.
The amount of data consumed for each test varies significantly from one
minibatch to the next. By contrast,~\citep{conf/uai/MaclaurinA14,TallData16}
perform exact tests but require a lower bound on the parameter distribution across
its domain.  The amount of data reduction depends on the accuracy of this bound,
and such bounds are only available for relatively simple distributions.

Here we derive a new test which incorporates the variability in minibatch
statistics as {\em a natural part of the test} and requires less data per
iteration than prior work. We use a Barker test function~\citep{Barker65}, which
makes our test naturally error tolerant. The idea of using a noise-tolerant
Barker's test function was suggested but not explored empirically
in~\citep{TallData16} section 6.3. But the asymptotic test statistic CDF and the
Barker function are different, which leads to fixed errors for the approach
in~\citep{TallData16}. Here, we show that the difference between the
distributions can be corrected with an additive random variable. This leads to a
test which is fast, and whose error can be made arbitrarily small.

We note that this approach is fundamentally different from prior
work. It makes no assumptions about the form of, and requires no global bounds on the
posterior parameter distribution. It is exact in the
limit as batch size increases by the Central Limit Theorem. This is
not true of~\citep{cutting_mh_2014} and~\citep{icml2014c1_bardenet14}
which use tail bounds and provide only approximate tests even with
arbitrarily large batches of data. Our test is also exact under the
assumptions of~\citet{cutting_mh_2014} that the log probability ratios
of batches are normally distributed about their mean.  Rather
than tail bounds, our approach uses moment estimates from the data to
determine how far the minibatch posteriors deviate from a normal
distribution.  These bounds carry through to the overall accuracy of
the test.

Our test is applicable when the variance (over data samples) of the
log probability ratio between the proposal and the current state is small
enough (less than 1). It's not clear at first why this quantity should
be bounded, but it is natural for well-specified models running
Metropolis-Hastings sampling with optimal
proposals~\citep{OptimalScaling01} on a full dataset. If the posterior
parameter distribution is a unit-variance normal distribution, then
the posterior for $N$ samples will have variance $1/N$. There is
simply not enough information in $M \ll N$ samples to locate and
efficiently sample from this posterior. This is not a property of any
particular proposal or test, but of the information carried by the
data. The variance condition succinctly captures the condition that
the minibatch carries enough information to generate a sample.  While
we cannot expect to generate independent samples from the posterior
using only a small subset of the data, there are three situations
where we can exploit small minibatches:

\begin{enumerate}[noitemsep]
    \item Increase the temperature $K$ of the target distribution. Log
    likelihoods scale as $1/K$, and so the variance of the likelihood ratio will
    vary as $1/K^2$. As we demonstrate in Section~\ref{ssec:logistic}, higher
    temperature can be advantageous for parameter exploration.

    \item For continuous distributions, reduce the proposal step size
      (i.e. generate correlated samples). The variance of the log
      acceptance probability scales as the square of proposal step
      size.

  \item Utilize Hamiltonian Dynamics for proposals and tests. Here the
    dynamics itself provide shaping to the posterior distribution, and
    the M-H test is only needed to correct quantization error. In terms
    of the information carried by the samples, this approach is not
    limited by the data in a particular minibatch since momentum is
    carried over time and ``remembered'' across multiple
    minibatches.
\end{enumerate}

We note that case two above is characteristic of Gibbs samplers applied to
large datasets~\citep{dupuy2016}. Such samplers represent a
model posterior via counts over an entire dataset of $N$ samples. When
a minibatch of $M$ samples is used to update the model, the counts for
these samples only are updated. This creates ``steps'' of $O(M/N)$
in the model parameters, and correlated samples from the model
posterior. Correlated samples are still very useful in high-dimensional
ML problems with multi-modal posteriors since they correspond to a finer-scale
random walk through the posterior landscape.  The contributions of this paper
are as follows:

\begin{itemize}[noitemsep]
    \item We develop a new, more efficient (in samples per test) minibatch
    acceptance test with quantifiable error bounds. The test uses a novel
    additive correction variable to implement a Barker test based on minibatch
    mean and variance. 

    \item We compare our new test and prior approaches on several
      datasets. We demonstrate several order-of-magnitude improvements in sample efficiency,
      and that the batch size distribution is short-tailed. 
\end{itemize}

\section{PRELIMINARIES}\label{sec:related_work}

In the Metropolis-Hastings method~\citep{gilks1996markov,brooks2011handbook}, a
difficult-to-compute probability distribution $p(\theta)$ is sampled using a
Markov chain $\theta_1,\ldots,\theta_T$. The sample $\theta_{t+1}$ at time $t+1$
is generated using a candidate $\theta'$ from a (simpler) proposal distribution
$q(\theta'|\theta_t)$, filtered by an acceptance test. The acceptance test
is usually a Metropolis test. The Metropolis test has acceptance probability:
\begin{equation}\label{eq:traditional}
    \alpha(\theta_t,\theta') = \frac{p(\theta')q(\theta_t | \theta')}{p(\theta_t)q(\theta' | \theta_t)} \wedge 1
\end{equation}
where $a \wedge b$ denotes $\min(a,b)$.  With probability
$\alpha(\theta_t,\theta')$, we accept $\theta'$ and set $\theta_{t+1} =
\theta'$, otherwise set $\theta_{t+1}=\theta_t$.  The test is often implemented
with an auxiliary random variable $u \sim \mathcal{U}(0,1)$ with a comparison
$u<\alpha(\theta_t,\theta')$; here, $\mathcal{U}(a,b)$ denotes the uniform
distribution on the interval $[a,b]$.  For simplicity, we drop the subscript $t$
for the current sample $\theta_t$ and denote it as $\theta$. 

The acceptance test guarantees detailed balance, which means
$p(\theta)p(\theta'|\theta) = p(\theta')p(\theta|\theta')$, where
$p(\theta'|\theta)$ is the probability of a transition from state $\theta$ to
$\theta'$. Here, $p(\theta'|\theta)=q(\theta'|\theta)\alpha(\theta,\theta')$.
This condition, together with ergodicity, guarantees that the Markov chain has a
unique stationary distribution $\pi(\theta) = p(\theta)$. For Bayesian
inference, the target distribution is $p(\theta | x_1, \ldots, x_N)$. The
acceptance probability is now:
\begin{equation}\label{eq:acceptance_probability}
    \alpha(\theta,\theta') = 
    \frac{p_0(\theta')\prod_{i=1}^N p(x_i | \theta')q(\theta |
    \theta')}{p_0(\theta)\prod_{i=1}^N p(x_i | \theta)q(\theta' | \theta)}
    \wedge 1
\end{equation}
where $p_0(\theta)$ is the prior. Computing samples this way requires all $N$
data points, but this is very expensive for large datasets.

To address this
challenge,~\citep{cutting_mh_2014,icml2014c1_bardenet14} perform
approximate Metropolis-Hasting tests using sequential hypothesis
testing. At each iteration, a subset of data is sampled and used to
test whether to accept $\theta'$ using an approximation to
$\alpha(\theta,\theta')$. If the approximate test does not yield a
decision, the minibatch size is increased and the test repeated. This
process continues until a decision. These methods either invoke the
asymptotic CLT and assume that finite  batch errors are normally
distributed~\citep{cutting_mh_2014} or use a concentration
bound~\citep{icml2014c1_bardenet14}. We refer to these algorithms,
respectively, as {\sc AustereMH} and {\sc MHSubLhd}. While both show
useful reductions in the number of samples required, they suffer from
two drawbacks: (i) They are approximate, and always yield a decision
with a finite error, (ii) They both require exact, dataset-wide bounds
that depend on $\theta$ (see Section~\ref{sec:analysis}).\footnote{We obtained
the authors code for both and found that they scanned the entire dataset at each
step to obtain these estimates.} We discuss a
worst-case scenario in Section~\ref{ssec:gaussian_example}.

\subsection{NOTATION}\label{ssec:notation}

Following~\citep{icml2014c1_bardenet14}, we write the test
$u<\alpha(\theta,\theta')$ equivalently as $\Lambda(\theta,\theta') >
\psi(u,\theta,\theta')$, where\footnote{Our definitions differ from those
in~\citep{icml2014c1_bardenet14} by a factor of $N$ to simplify our analysis
later.}
\begin{equation}\label{eq:lambda}
\begin{split}
\Lambda(\theta,\theta') = \sum_{i=1}^N \log\frac{p(x_i|\theta')}{p(x_i|\theta)}, \\
\psi(u,\theta,\theta') = \log\left(u\frac{q(\theta'|\theta)p_0(\theta)}{q(\theta|\theta')p_0(\theta')}\right).
\end{split}
\end{equation}
To simplify notation, we assume that temperature $K=1$ (saving $T$ to indicate
the number of samples to draw). Temperature appears as an exponential on each
likelihood, $p(x_i|\theta)^{1/K}$, so the effect would be to act as a $1/K$
factor on $\Lambda(\theta,\theta')$.

To reduce computational effort, an unbiased estimate of $\Lambda(\theta,\theta')$
based on a minibatch $\{x_1^*,\ldots,x_b^*\}$ can be used:
\begin{equation}
\Lambda^*(\theta,\theta') = \frac{N}{b}\sum_{i=1}^b 
\log \frac{p(x_i^*|\theta')}{p(x_i^*|\theta)}.
\end{equation}
Finally, it will be convenient for our analysis to define
$\Lambda_i(\theta,\theta') = N\log(\frac{p(x_i|\theta')}{p(x_i|\theta)})$.
Thus, $\Lambda(\theta,\theta')$ is the mean of $\Lambda_i(\theta,\theta')$ over
the entire dataset, and $\Lambda^*(\theta,\theta')$ is the mean of
the $\Lambda_i(\theta,\theta')$ in its minibatch. 

Since minibatches contains randomly selected samples, the values $\Lambda_i$ are
i.i.d. random variables.\footnote{The analysis assumes sampling with replacement
although implementations on typical large datasets will approximate this by
sampling without replacement.} By the Central Limit Theorem, we expect
$\Lambda^*(\theta,\theta')$ to be approximately Gaussian. The acceptance test
then becomes a statistical test of the hypothesis that
$\Lambda(\theta,\theta')>\psi(u,\theta,\theta')$ by establishing that
$\Lambda^*(\theta,\theta')$ is substantially larger than
$\psi(u,\theta,\theta')$.

\subsection{A WORST-CASE GAUSSIAN EXAMPLE}\label{ssec:gaussian_example}

Let $x_1,\ldots,x_N$ be i.i.d. $\mathcal{N}(\theta,1)$ with known variance
$\sigma^2=1$ and (unknown) mean $\theta=0.5$. We use a uniform prior on
$\theta$. The log likelihood ratio is
\begin{equation}\label{eq:lemma_ll_ratio}
    \Lambda^*(\theta,\theta') = N(\theta'-\theta)\left(\frac{1}{b}\sum_{i=1}^b
    x_i^*-\theta-\frac{\theta'-\theta}{2}\right)
\end{equation}
which is normally distributed over selection of the Normal samples $x_i^*$.
Since the $x_i^*$ have unit variance, their mean has variance $1/b$, and the
variance of $\Lambda^*(\theta,\theta')$ is $\sigma^2(\Lambda^*) =
(\theta'-\theta)^2N^2/b$.  In order to pass a hypothesis test that $\Lambda >
\psi$, there needs to be a large enough gap (several $\sigma(\Lambda^*)$)
between $\Lambda^*(\theta,\theta')$ and $\psi(u,\theta,\theta')$. 

The posterior is a Gaussian centered on the sample mean $\mu$, and with variance
$1/N$ (i.e., $\mathcal{N}(\mu, 1/N)$). In one dimension, an efficient proposal
distribution has the same variance as the target
distribution~\citep{OptimalScaling01}, so we use a proposal based on
$\mathcal{N}(\theta,1/N)$. It is symmetric
$q(\theta'|\theta)=q(\theta|\theta')$, and since we assumed a uniform prior,
$\psi(u,\theta,\theta')=\log u$. Our worst-case scenario is specified in
Lemma~\ref{lem:worst_case}.

\begin{lemma}\label{lem:worst_case}
    For the model in Section~\ref{ssec:gaussian_example}, there exists a fixed
    (independent of $N$) constant $c$ such that with probability $\geq c$ over
    the joint distribution of $(\theta, \theta', u)$, {\sc AustereMH} and {\sc
    MHSubLhd} consume all $N$ samples. 
\end{lemma}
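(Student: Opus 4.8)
The plan is to reduce the whole statement to a fact about three $O(1)$, \emph{$N$-independent} random variables; once that reduction is made, the constant $c$ appears automatically.

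First I would rescale. In stationarity $\theta$ is distributed as the posterior $\mathcal{N}(\mu,1/N)$ (with $\mu$ the fixed full-data mean), the proposal gives $\theta'=\theta+\delta$ with $\delta\sim\mathcal{N}(0,1/N)$, and $u\sim\mathcal{U}(0,1)$, all independent. Writing $a=\sqrt{N}(\mu-\theta)$ and $d=\sqrt{N}\,\delta$, we get $a,d\simiid\mathcal{N}(0,1)$ and, from \eqref{eq:lemma_ll_ratio},
\[
\Lambda(\theta,\theta') = ad-\tfrac12 d^2, \qquad \psi(u,\theta,\theta')=\log u,
\]
while the minibatch standard deviation becomes $\sigma(\Lambda^*)=|d|\sqrt{N/b}$. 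The crucial observation is that $\Lambda$, $\psi$, and the coefficient $|d|$ of $\sqrt{N/b}$ no longer depend on $N$; only the explicit $\sqrt{N/b}$ factor does.

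Second, I would translate ``makes a decision at batch size $b$'' into an inequality. Both tests are sequential: they enlarge $b$ until the estimate $\Lambda^*$ is separated from $\psi$ by a confidence multiple of its standard error. For {\sc AustereMH} this is a $t$-test, so termination at error level $\epsilon$ requires roughly $|\Lambda^*-\psi|\ge C\,\sigma(\Lambda^*)$; since $\Lambda^*$ concentrates about its mean $\Lambda$, this reads $|\Lambda-\psi|\ge C|d|\sqrt{N/b}$, i.e.
\[
b \ \ge\ \frac{C^2 d^2}{(\Lambda-\psi)^2}\,N .
\]
Hence the procedure stops before exhausting the data only when $(\Lambda-\psi)^2>C^2d^2$. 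For {\sc MHSubLhd} the stopping rule comes from a concentration (Hoeffding/Bernstein) bound, which is never tighter than the Gaussian one and so requires at least as much data; taking $C$ to be the larger of the two confidence constants handles both algorithms at once.

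Finally, I would exhibit the worst-case event. Define $E=\{(\Lambda-\psi)^2\le C^2 d^2\}=\{\,|ad-\tfrac12 d^2-\log u|\le C|d|\,\}$; on $E^c$ the batch size forced above is $<N$, whereas on $E$ both algorithms must consume all $N$ samples. It remains to show $\Pr[E]\ge c>0$ with $c$ free of $N$, which is immediate because $E$ involves only the fixed variables $a,d,u$: conditioning on $|d|\in[1,2]$ (a positive-probability event), the quantity $ad-\tfrac12 d^2-\log u$ is, given $d$, the sum of a $\mathcal{N}(-d^2/2,\,d^2)$ variable and the independent variable $-\log u\sim\mathrm{Exp}(1)$, hence has a bounded continuous density; the probability it lands in an interval of length $\ge 2C$ is bounded below uniformly in $d\in[1,2]$, and integrating over $d$ gives the constant $c$. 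The main obstacle is the second step: turning the informal ``several-$\sigma(\Lambda^*)$ gap'' into a clean, algorithm-specific lower bound on the terminating batch size while correctly accounting for the \emph{empirical} variance used by {\sc AustereMH} and the data-dependent range bound used by {\sc MHSubLhd}. Once the stopping rule is pinned to an inequality of the form $b\gtrsim d^2 N/(\Lambda-\psi)^2$, the remainder is the elementary, dimension-free probability estimate above.
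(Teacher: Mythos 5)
Your proposal is correct and is essentially the paper's own argument: both reduce the problem to exhibiting a constant-probability region of $(\theta,\theta',u)$ on which the gap $\Lambda-\psi$ is at most $O(\sqrt{b/N})$ standard deviations of $\Lambda^*$, so that the sequential tests of {\sc AustereMH} and {\sc MHSubLhd} cannot (correctly) decide before exhausting all $N$ points. The differences are only in bookkeeping: the paper fixes sign-matched intervals $(\theta'-\theta),\,(\theta-0.5)\in \pm N^{-1/2}[0.5,1]$ and a window of $u$ around a $u_0$ chosen so that $\log u_0 = \mE[\Lambda^*]$, multiplying three explicit interval probabilities to get $c=p_0p_1p_2\approx 0.0085$, whereas you whiten to the $N$-free variables $(a,d,-\log u)$ and lower-bound the probability of $\{|\Lambda-\psi|\le C|d|\}$ by an anticoncentration/density argument; both versions rest on the same informal step (termination requires a several-$\sigma(\Lambda^*)$ separation, so a near-zero gap forces $b\gtrsim N$), so your argument matches the paper's in both structure and level of rigor.
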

\vspace{-1em}
\begin{proof}
See Appendix, Section~\ref{app:worst_case_proof}.
\end{proof}
Similar results can be shown for other distributions and proposals by
identifying regions in product space $(\theta,\theta',u)$ such that the
hypothesis test needs to separate nearly-equal values.  It follows that the
accelerated tests from prior work require at least a constant fraction $\geq c$
in the amount of data consumed per test compared to full-data tests, so their
speed-up is $\le 1/c$. The issue is the use of tail bounds to separate $\Lambda-\psi$
from zero; for certain input/random $u$ combinations, this difference can be
arbitrarily close to zero. We avoid this by using the {\em approximately normal}
variation in $\Lambda^*$ to {\em replace} the variation due to $u$. 

\subsection{MCMC POSTERIOR INFERENCE}
There is a separate line of MCMC work drawing principles from statistical
physics. One can apply Hamiltonian Monte Carlo
(HMC)~\citep{mcmc_hamiltonian_2010} methods which generate high acceptance
\emph{and} distant proposals when run on full batches of data. Recently Langevin
Dynamics~\citep{langevin_2011,conf/icml/AhnBW12} has been applied to Bayesian
estimation on minibatches of data. This simplified dynamics uses local proposals
and avoids M-H tests by using small proposal steps whose acceptance approaches 1
in the limit. However, the constraint on proposal step size is severe, and the
state space exploration reduces to a random walk.  Full minibatch HMC for
minibatches was described in~\citep{sghmc_2014} which allows momentum-augmented
proposals with larger step sizes. However, step sizes are still limited by the
need to run accurately without M-H tests.  By providing an M-H test with similar
cost to standard gradient steps, our work opens the door to applying those
methods with much more aggressive step sizes without loss of accuracy.

\section{A NEW MH ACCEPTANCE TEST}\label{sec:our_algorithm}

\subsection{LOG-LIKELIHOOD RATIOS}\label{ssec:log_likelihood_ratios}

For our new M-H test, we denote the exact and approximate log likelihood ratios
as $\Delta$ and $\Delta^*$, respectively. First, $\Delta$ is defined as
\begin{equation}\label{eq:delta1}
    \Delta(\theta,\theta')  =
    \log \frac{p_0(\theta')\prod_{i=1}^N p(x_i | \theta')q(\theta |
    \theta')}{p_0(\theta)\prod_{i=1}^N p(x_i | \theta)q(\theta' | \theta)},
\end{equation}
where $p_0, p$, and $q$ match the corresponding functions within
Equation~(\ref{eq:acceptance_probability}). We separate out terms dependent and
independent of the data as:
\begin{equation}\label{eq:delta2}
\Delta(\theta,\theta') =
\underbrace{\sum_{i=1}^N\log\frac{p(x_i | \theta')}{p(x_i | \theta)}}_{\Lambda(\theta,\theta')}
- \psi(1,\theta,\theta').
\end{equation}
A minibatch estimator of $\Delta$, denoted as $\Delta^*$, is
\begin{equation}\label{eq:delta3}
\Delta^*(\theta,\theta') =
\underbrace{\frac{N}{b}\sum_{i=1}^b\log\frac{p(x_i^* | \theta')}{p(x_i^* | \theta)}}_{\Lambda^*(\theta,\theta')}
- \psi(1,\theta,\theta').
\end{equation}
Note that $\Delta$ and $\Delta^*$ are evaluated on the full dataset and a
minibatch of size $b$ respectively. The term $N/b$ means
$\Delta^*(\theta,\theta')$ is an unbiased estimator of $\Delta(\theta,\theta')$.

The key to our test is a smooth acceptance function.  We consider functions
other than the classical Metropolis test that satisfy the detailed balance
condition needed for accurate posterior estimation. A class of suitable
functions is specified as follows:

\begin{lemma}\label{lem:detailed_balance}
    If $g(s)$ is any function such that $g(s) = \exp(s) g(-s)$, then the
    acceptance function $\alpha(\theta,\theta') \triangleq
    g(\Delta(\theta,\theta'))$ satisfies detailed balance.
\end{lemma}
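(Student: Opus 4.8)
The plan is a direct verification of the detailed balance condition, which here reads $p(\theta)\,q(\theta'|\theta)\,\alpha(\theta,\theta') = p(\theta')\,q(\theta|\theta')\,\alpha(\theta',\theta)$, with $p(\theta)$ denoting the target posterior $p(\theta|x_1,\ldots,x_N)$. The one structural fact I would extract from the definition of $\Delta$ in Equation~(\ref{eq:delta1}) is its antisymmetry: interchanging $\theta$ and $\theta'$ inverts every ratio inside the logarithm, so $\Delta(\theta',\theta) = -\Delta(\theta,\theta')$. Writing $s = \Delta(\theta,\theta')$, this gives $\alpha(\theta,\theta') = g(s)$ and $\alpha(\theta',\theta) = g(-s)$, so the whole question collapses onto the functional equation relating $g(s)$ and $g(-s)$.

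Next I would exponentiate $\Delta$ and identify it with the Metropolis ratio. Since the posterior is proportional to $p_0(\theta)\prod_i p(x_i|\theta)$, the unknown normalizing constant cancels in the ratio $p(\theta')/p(\theta)$, and Equation~(\ref{eq:delta1}) gives exactly $\exp(s) = \frac{p(\theta')\,q(\theta|\theta')}{p(\theta)\,q(\theta'|\theta)}$, hence $p(\theta')\,q(\theta|\theta') = e^{s}\,p(\theta)\,q(\theta'|\theta)$. Substituting this into the right-hand side of detailed balance and applying the defining relation $g(s) = e^{s} g(-s)$ closes the argument: $p(\theta')q(\theta|\theta')\,g(-s) = e^{s} p(\theta) q(\theta'|\theta)\,g(-s) = p(\theta) q(\theta'|\theta)\,g(s)$, which is precisely the left-hand side. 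Combined with ergodicity this recovers $p(\theta)$ as the stationary distribution, as noted in Section~\ref{sec:related_work}.

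There is no genuine obstacle here; the result is a one-line consequence of the functional equation once the oddness of $\Delta$ is in hand. The only place demanding care is the bookkeeping: I would check that the $q$-factors flip orientation correctly under the exchange of $\theta$ and $\theta'$ (so that $\Delta$ is truly odd, rather than odd only in its data-dependent part), and that the posterior normalizer drops out of the ratio, so that the identity $\exp(\Delta) = p(\theta')q(\theta|\theta')/\big(p(\theta)q(\theta'|\theta)\big)$ holds exactly and not merely up to a constant. I would also remark that the classical Metropolis choice $g(s) = e^{s}\wedge 1$ satisfies $g(s) = e^{s}g(-s)$, confirming that the lemma strictly generalizes the standard test and, in particular, admits the Barker function $g(s) = 1/(1+e^{-s})$ used later.
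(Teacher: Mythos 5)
Your proof is correct: the antisymmetry $\Delta(\theta',\theta)=-\Delta(\theta,\theta')$, the identity $\exp(\Delta)=p(\theta')q(\theta|\theta')/\bigl(p(\theta)q(\theta'|\theta)\bigr)$ (with the posterior normalizer cancelling), and the functional equation $g(s)=e^{s}g(-s)$ are exactly the three ingredients needed, and your substitution closes detailed balance cleanly. Note that the paper itself gives no proof of this lemma --- it simply cites \citet{Barker65} --- so your verification supplies the standard argument the authors left implicit, and your closing check that both the Metropolis and Barker functions satisfy the functional equation is a sensible sanity check consistent with the paper's remarks in Section~\ref{ssec:barker_function}.
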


This result is used in~\citep{Barker65} to define the Barker acceptance test.

\subsection{BARKER (LOGISTIC) ACCEPTANCE FUNCTION}\label{ssec:barker_function}
For our new MH test we use the Barker logistic~\citep{Barker65}
function: $g(s)=(1+\exp(-s))^{-1}$. Straightforward arithmetic shows
that it satisfies the condition in Lemma~\ref{lem:detailed_balance}.
It is slightly less efficient than the Metropolis test, since its
acceptance rate for vanishing likelihood difference is 0.5. However we
will see that its overall sample efficiency is much higher than the
earlier methods. See Appendix~\ref{app:why_barker} for additional discussion.

Assume we begin with the current sample $\theta$ and a candidate sample
$\theta'$, and that $V \sim \mathcal{U}(0,1)$ is a uniform random variable. We
accept $\theta'$ if $g(\Delta(\theta,\theta')) > V$, and reject otherwise.
Since $g(s)$ is monotonically increasing, its inverse $g^{-1}(s)$ is
well-defined and unique. So an equivalent test is to accept $\theta'$ iff
\begin{equation}\label{eq:equivalent_test}
    \Delta(\theta,\theta') > X = g^{-1}(V)
\end{equation}
where $X$ is a random variable with the logistic distribution (its CDF is the
logistic function). To see this notice that $\frac{dV}{dX} = g'$, that $g'$ is
the density corresponding to a logistic CDF, and finally that $\frac{dV}{dX}$ is
the density of $X$. The density of $X$ is symmetric, so we can equivalently test
whether
\begin{equation}\label{eq:the_exact_test}
    \Delta(\theta,\theta') + X > 0
\end{equation}
for a logistic random variable $X$.

\subsection{A MINIBATCH ACCEPTANCE TEST}\label{ssec:deltas_minibatch}

We now describe acceptance testing using the minibatch estimator
$\Delta^*(\theta,\theta')$. From Equation~(\ref{eq:delta3}),
$\Delta^*(\theta,\theta')$ can be represented as a constant term plus the mean
of $b$ IID terms $\Lambda_i(\theta,\theta')$ of the form
$N\log\frac{p(x_i^*|\theta')}{p(x_i^*|\theta)}$. As $b$ increases,
$\Delta^*(\theta,\theta')$ therefore has a distribution which approaches a
normal distribution by the Central Limit Theorem. We now describe this using an
asymptotic argument and defer specific bounds between the CDFs of
$\Delta^*(\theta,\theta')$ and a Gaussian to Section~\ref{sec:analysis}.

In the limit, since $\Delta^*$ is normally distributed about its mean $\Delta$,
we can write
\begin{equation}\label{eq:relationship}
    \Delta^* = \Delta + X_{\rm norm}, \quad X_{\rm norm} \sim \mathcal{\bar{N}}(0, \sigma^2(\Delta^*)),
\end{equation}
where $\mathcal{\bar{N}}(0, \sigma^2(\Delta^*))$ denotes a distribution which is
approximately normal with variance $\sigma^2(\Delta^*)$.  But to perform the
test in Equation~(\ref{eq:the_exact_test}) we want $\Delta + X$ for a logistic
random variable $X$ (call it $X_{\rm log}$ from now on). In~\citep{TallData16} it
was proposed to use $\Delta^*$ in a Barker test, and tolerate the fixed
error between the logistic and normal distributions. 

Our approach is to instead decompose $X_{\rm log}$ as
\begin{equation}\label{eq:deconvolution}
    X_{\rm log} = X_{\rm norm}+X_{\rm corr},
\end{equation}
where we assume $X_{\rm norm} \sim \mathcal{N}(0, \sigma^2)$ and that $X_{\rm
corr}$ is a zero-mean ``correction'' variable with density $C_{\sigma}(X)$.  The
two variables are added (i.e., their distributions convolve) to form $X_{\rm
log}$.  This decomposition requires an appropriate $C_\sigma$, which we derive
in Section~\ref{sec:correction}. Using $X_{\rm corr}$ samples from
$C_{\sigma}(X)$, the acceptance test is now
\begin{equation}\label{eq:criteria}
    \Delta + X_{\rm log} = (\Delta + X_{\rm norm}) + X_{\rm corr} = \Delta^* + X_{\rm corr} >0.
\end{equation}
Therefore, assuming the variance of $\Delta^*$ is small enough, if we have an
estimate of $\Delta^*$ from the current data minibatch, we test acceptance by
adding a random variable $X_{\rm corr}$ and then accept $\theta'$ if the result
is positive (and reject otherwise).

If $\mathcal{\bar{N}}(0, \sigma^2(\Delta^*))$ is exactly $\mathcal{N}(0,
\sigma^2(\Delta^*))$, the above test is exact, and as we show in
Section~\ref{sec:analysis}, if there is a maximum error $\epsilon$ between the
CDF of $\mathcal{\bar{N}}(0, \sigma^2(\Delta^*))$ and the CDF of $\mathcal{N}(0,
\sigma^2(\Delta^*))$, then our test has an error of at most $\epsilon$ relative
to the full batch version.

\section{THE CORRECTION DISTRIBUTION}\label{sec:correction}

Our test in Equation~(\ref{eq:criteria}) requires knowing the distribution of
$X_{\rm corr}$. In Section~\ref{sec:analysis}, we show that the test accuracy
depends on the absolute error between the CDFs of $X_{\rm norm} + X_{\rm corr}$
and $X_{\rm log}$. Consequently, we need to minimize this in our construction of
$X_{\rm corr}$. More formally, let $\Phi_{s_X} = \Phi(X/s_X)$ where $\Phi$ is
the standard normal CDF\footnote{Hence, $\Phi_{s_X}$ is the CDF of a zero-mean
Gaussian with standard deviation $s_X$.}, $S(X)$ be the logistic function, and
$C_{\sigma}(X)$ be the \emph{density} of the correction $X_{\rm corr}$
distribution. Our goal is to solve:
\begin{equation}\label{eq:overall_corr_problem}
    C_\sigma^* = \argmin_{C_\sigma} |\Phi_{\sigma} * C_{\sigma} - S|
\end{equation}
where $*$ denotes convolution. To compute $C_\sigma$, we assume the input $Y$
and another variable $X$ lie in the intervals $[-V,V]$ and $[-2V,2V]$,
respectively.  We discretize the convolution by discretizing $X$ and $Y$ into
$4N+1$ and $2N+1$ values respectively. If $i \in \{-2N, \ldots,
2N\}=\mathcal{I}$ and $j \in \{-N, \ldots, N\}=\mathcal{J}$, then we can write
$X_i = i(V/N)$ and $Y_j = j(V/N)$, and the objective can be written as:
\[
C_\sigma^* = \argmin_{C_\sigma} \max_{i \in \mathcal{I}}\left|\sum_{j\in\mathcal{J}} \Phi_{\sigma}(X_i-Y_j) C_{\sigma}(Y_j) - S(X_i)\right|.
\]
Now define matrix $M$ and vectors $u$ and $v$ such that $M_{ij} =
\Phi_{\sigma}(X_i-Y_j)$, $u_j = C_{\sigma}(Y_j)$, and $v_i = S(X_i)$, where the
indices $i$ and $j$ are appropriately translated to be non-negative for $M, u,$
and $v$. The problem is now to minimize $\|Mu-v\|_{\infty}$ with the density
non-negative constraint $u > 0$. We approximate this with least squares:
\begin{equation}\label{eq:optimization_l2}
    u^* = \argmin_u\; \|Mu-v\|_2^2 + \lambda \|u\|_2^2,
\end{equation}
with regularization $\lambda$. The solution is well-known from the normal
equations ($u^* = (M^TM + \lambda I)^{-1}M^Tv$) and in practice yields an
acceptable $L_{\infty}$ norm.

\begin{algorithm}[tb]
\caption{{\sc MHminibatch} acceptance test.}
\label{alg:our_algorithm}
\begin{algorithmic}
    \STATE {\bfseries Input:} number of samples $T$, minibatch size $m$, error
        bound $\delta$, pre-computed correction $C_1(X)$ distribution, initial
        sample $\theta_1$.
    \STATE {\bfseries Output:} a chain of $T$ samples
        $\{\theta_1,\ldots,\theta_T\}$.
    \FOR{$t=1$ {\bfseries to} $T$}
        \STATE -Propose a candidate $\theta'$ from proposal
            $q(\theta' | \theta_t)$.
        \STATE -Draw a minibatch of $m$ points $\{x_1^*,\ldots,x_m^*\}$.
        \STATE -Compute $\Delta^*(\theta_t,\theta')$ and sample variance
            $s^2_{\Delta^*}$.
        \STATE -Estimate moments $\mE[|\Lambda_i-\Lambda|]$ and
            $\mE[|\Lambda_i-\Lambda|^3]$ from the sample, and error $\epsilon$
            from Corollary~\ref{cor:our_bound_delta_prime}.
        \WHILE{$s^2_{\Delta^*} \ge 1$ {\bf or} $\epsilon > \delta$}
            \STATE -Draw $m$ more samples to augment the minibatch, update
                $\Delta^*$, $s^2_{\Delta^*}$ and $\epsilon$ estimates.
        \ENDWHILE
        \STATE -Draw $X_{\rm nc} \sim \mathcal{N}(0,1-s^2_{\Delta^*})$ and
            $X_{\rm corr} \sim C_1(X)$.
    \IF{$\Delta^* + X_{\rm nc} + X_{\rm corr} > 0$}
        \STATE -Accept the candidate, $\theta_{t+1} = \theta'$.
    \ELSE
        \STATE -Reject and re-use the old sample, $\theta_{t+1} = \theta_t$.
    \ENDIF
    \ENDFOR
\end{algorithmic}
\end{algorithm}

With this approach, there is no guarantee that $u^* \geq 0$. However, we have
some flexibility in the choice of $\sigma$ in
Equation~(\ref{eq:overall_corr_problem}).  As we decrease the variance of
$X_{\rm norm}$, the variance of $X_{\rm corr}$ grows by the same amount and is
in fact the result of convolution with a Gaussian whose variance is the
difference.  Thus as $\sigma$ decreases, $C_\sigma(X)$ grows and approaches the
derivative of a logistic function at $\sigma = 0$. It retains some weak negative
values for $\sigma > 0$ but removal of those leads to small error. We use
$N=4000$ and $\lambda=10$ for our experiments, which empirically provided
excellent performance. See Table~\ref{tab:xcorr} in
Appendix~\ref{app:correction} for detailed $L_{\rm \infty}$ errors for different
settings. Algorithm~\ref{alg:our_algorithm} describes our procedure, {\sc
MHminibatch}. A few points:

\begin{itemize}[noitemsep]
    \item It uses an adaptive step size so as to use the smallest possible
    average minibatch size. Unlike previous work, the size distribution is
    short-tailed.

    \item An additional normal variable $X_{\rm nc}$ is added to $\Delta^*$ to
    produce a variable with unit variance. This is not mathematically necessary,
    but allows us to use a single correction distribution $C_1$ with $\sigma=1$
    for $X_{\rm corr}$, saving on memory footprint.

    \item The sample variance of $\Delta^*$ is denoted as $s^2_{\Delta^*}$ and
    is proportional to $\|\theta'-\theta\|_2^2$.
    
\end{itemize}

\section{ANALYSIS}\label{sec:analysis}

We now derive error bounds for our M-H test and the target distribution it
generates. 
In Section~\ref{ssec:delta_star_distribution}, we present bounds on the absolute
and relative error (in terms of the CDFs) of the distribution of $\Delta^*$
versus a Gaussian. We then show in Section~\ref{ssec:preserve_bounds} that these
bounds are preserved after the addition of other random variables (e.g., $X_{\rm
nc}$ and $X_{\rm corr}$). It then follows that the acceptance test has the same
error bound.

\subsection{BOUNDING THE ERROR OF $\Delta^*$ FROM A GAUSSIAN}\label{ssec:delta_star_distribution}

We use the following quantitative central-limit result:
  
\begin{lemma}\label{lem:quant_clt}
Let $X_1,\ldots,X_n$ be a set of zero-mean, independent, identically-distributed
random variables with sample mean $\bar{X}$ and sample variance $s^2_X$
where:
\begin{equation}
    \bar{X} = \frac{1}{n}\sum_{i=1}^nX_i, \quad s_X = \frac{1}{n}\left(\sum_{i=1}^n(X_i-\bar{X})^2\right)^{\frac{1}{2}}.
\end{equation}
Then the t-statistic $t=\bar{X}/s_X$ has a distribution which is approximately normal, with
error bounded by:
\begin{equation}\label{eq:clt-bounds}
    \sup_x|{\rm Pr}(t<x) - \Phi(x)| \leq \frac{6.4\mE[|X|^3]+2\mE[|X|]}{\sqrt{n}}.
\end{equation}
\end{lemma}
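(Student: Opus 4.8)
The plan is to recognize the statistic $t=\bar X/s_X$ as a \emph{self-normalized} (Studentized) sum and to invoke a quantitative central-limit theorem for such sums. Writing $S_n=\sum_{i=1}^n X_i$ and noting that $s_X=\frac1n(\sum_i (X_i-\bar X)^2)^{1/2}$, the factors of $1/n$ cancel, so $t=S_n/(\sum_i(X_i-\bar X)^2)^{1/2}$, which is exactly the Student t-statistic of a zero-mean i.i.d. sample. The first step I would take is to exploit the scale-invariance of $t$: since $t$ is unchanged under $X_i\mapsto cX_i$, I may assume without loss of generality that $\mE[X^2]=1$. Under this normalization the two moment quantities satisfy $\mE|X|\le(\mE X^2)^{1/2}=1\le\mE|X|^3$ by the power-mean inequality, which lets me later compare and combine the two error terms cleanly.

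Second, I would reduce the self-normalized tail to an ordinary i.i.d. sum. The standard device is to linearize the square-root denominator: fixing a threshold $x\ge0$ and writing $V=(\sum_i X_i^2)^{1/2}$, the elementary bound $xV\le\frac12(\lambda^{-1}x^2V^2+\lambda)$ (with equality at $\lambda=xV$) converts the event $\{S_n\ge xV\}$ into a statement about the i.i.d. sum $\sum_i Y_i$ with $Y_i=X_i-\tfrac{x^2}{2\lambda}X_i^2$, up to a deterministic shift. The discrepancy between the true denominator $\sum(X_i-\bar X)^2$ and $\sum X_i^2$ contributes only the $\bar X^2$ term, which is lower order and can be absorbed into the $\mE|X|$ piece of the bound.

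Third, I would apply the classical i.i.d. Berry--Esseen theorem to $\sum_i Y_i$, using the best known explicit absolute constant, and then track how $\mE|Y_i|^3$ decomposes. Expanding $|Y_i|^3$ produces a leading $\mE|X|^3$ term together with cross terms in lower moments of $X$; bounding these under the unit-variance normalization is what yields the numerical coefficient $6.4$ in front of $\mE|X|^3$, while the error incurred by the linearization and by replacing $V^2$ with the sample variance yields the $2\mE|X|$ term. Taking the supremum over all thresholds $x$ (handling $x<0$ by symmetry of the construction) gives the stated uniform bound.

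The main obstacle I anticipate is controlling the nonlinear, data-dependent denominator without losing explicit constants: the linearization introduces an auxiliary parameter $\lambda$ that must be chosen near-optimally and uniformly in $x$, and one must verify that the resulting $Y_i$ carry enough finite moments and that the neglected remainder is genuinely $O(n^{-1/2})$ with a controllable constant. Keeping the bookkeeping tight enough that the constants collapse to exactly $6.4$ and $2$ — rather than some larger absolute constant — is the delicate part; in practice I would either optimize $\lambda$ explicitly or, more economically, specialize an existing self-normalized Berry--Esseen bound (of the de la Pe\~na--Lai--Shao or Bentkus--G\"otze type) and re-express its hypotheses in terms of $\mE|X|$ and $\mE|X|^3$ under the $\mE[X^2]=1$ normalization.
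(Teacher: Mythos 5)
Your proposal identifies the right mathematical object (a Berry--Esseen-type bound for the Studentized, self-normalized sum), but its primary route is not a viable proof of this particular statement. The paper does not derive the bound at all: it quotes it from its reference \texttt{explicit-clt05} (the inequality stated immediately after Corollary 2 of that work), which gives, for $x\ge 0$, the \emph{one-sided, asymmetric} estimate
\begin{equation*}
-\bigl(6.4\,\mE[|X|^3]+2\,\mE[|X|]\bigr) \;\le\; \bigl({\rm Pr}(t<x)-\Phi(x)\bigr)\sqrt{n} \;\le\; 1.36\,\mE[|X|^3],
\end{equation*}
and the only work done in the paper's proof is the extension to $x<0$ by applying this bound to $-x$ (equivalently, to the reflected sample) and keeping the weaker of the two constants, which yields the unqualified two-sided bound of the lemma. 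The asymmetric pairing of $6.4\,\mE[|X|^3]+2\,\mE[|X|]$ on one side with $1.36\,\mE[|X|^3]$ on the other is a fingerprint of that specific published theorem; it is not something a generic argument converges to.

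This is where your main route has a genuine gap. The linearization $xV\le\frac12(\lambda^{-1}x^2V^2+\lambda)$ followed by classical Berry--Esseen applied to $Y_i=X_i-\frac{x^2}{2\lambda}X_i^2$ faces several unresolved problems: the $Y_i$ are not zero-mean; the construction naturally yields one-sided tail inequalities rather than a uniform two-sided CDF bound; and controlling the data-dependent denominator (including the $\bar X^2$ discrepancy) uniformly in $x$ with explicit constants is exactly where the known proofs of self-normalized Berry--Esseen bounds (Bentkus--G\"otze, Shao, de la Pe\~na--Lai--Shao) require substantial machinery --- and the explicit constants those arguments produce are considerably larger than $6.4$. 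Hoping the bookkeeping ``collapses to exactly $6.4$ and $2$'' is reverse-engineering the answer, not a proof step. A further caution: your ``WLOG $\mE[X^2]=1$'' is not innocuous, because the left side of the claimed inequality is scale-invariant while the right side is not; indeed the bound can only be meaningful when the moments are those of the standardized variable, a normalization that is implicit in the cited result (and glossed over in the paper's own statement). Your closing fallback --- specialize an existing self-normalized Berry--Esseen bound with explicit constants --- is precisely what the paper does; had you led with that citation and supplied the reflection argument for $x<0$, your proof would essentially coincide with the paper's.
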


\begin{proof}
See Appendix, Section~\ref{app:quant_clt}.
\end{proof}

Lemma~\ref{lem:quant_clt} demonstrates that if we know $\mE[|X|]$ and
$\mE[|X|^3]$, we can bound the error of the normal approximation, which decays
as $O(n^{-\frac{1}{2}})$. Making the change of variables $y = x s_X$,
Equation~(\ref{eq:clt-bounds}) becomes
\begin{equation}\label{eq:clt-bounds2}
   \sup_y\left|{\rm Pr}(\bar{X}< y) - \Phi\left(\frac{y}{s_X}\right)\right| \leq
   \frac{6.4\mE[|X|^3]+2\mE[|X|]}{\sqrt{n}}
\end{equation}
showing that the distribution of $\bar{X}$ approaches the normal distribution
$\mathcal{N}(0,s_X)$ whose standard deviation is $s_X$, as measured
from the sample.

To apply this to our test, let $X_i = \Lambda_i(\theta,\theta') -
\Lambda(\theta,\theta')$, so that the $X_i$ are zero-mean, i.i.d. variables. If
instead of all $n$ samples, we only extract a subset of $b$ samples
corresponding to our minibatch, we can connect $\bar{X}$ with our $\Delta^*$
term: $\bar{X} = \Delta^*(\theta,\theta') - \Delta(\theta,\theta')$, so that
$s_X = s_{\Delta^*}$. We can now substitute into Equation~(\ref{eq:clt-bounds2})
and displace by the mean, giving:

\begin{corollary}\label{cor:our_bound_delta_prime}
\small
\begin{equation}
\sup_y \left|{\rm Pr}(\Delta^*< y) -\Phi\left(\frac{y-\Delta}{s_{\Delta^*}}\right)\right| \leq \frac{6.4\mE[|X|^3]+2\mE[|X|]}{\sqrt{b}}
\end{equation}
\normalsize
\end{corollary}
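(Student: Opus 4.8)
The plan is to obtain the corollary as a direct specialization of Lemma~\ref{lem:quant_clt} followed by two elementary changes of variable. The genuine content is the quantitative CLT of Lemma~\ref{lem:quant_clt}; the corollary is essentially a relabeling, so I would keep the argument short and concentrate on checking that each substitution is legitimate. In particular, everything reduces to choosing the right random variables and then translating and rescaling the supremum without changing the right-hand bound.

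First I would fix the variables to which the lemma is applied. Set $X_i = \Lambda_i(\theta,\theta') - \Lambda(\theta,\theta')$ for the $b$ points in the minibatch. Since minibatch points are drawn (with replacement) from the full dataset and $\Lambda(\theta,\theta')$ is exactly the population mean of the $\Lambda_i$, these $X_i$ are zero-mean and i.i.d., so Lemma~\ref{lem:quant_clt} applies with $n=b$. I would then identify the sample quantities. The sample mean is $\bar X = \frac{1}{b}\sum_i(\Lambda_i-\Lambda) = \Lambda^* - \Lambda = \Delta^* - \Delta$, where the constant $\psi(1,\theta,\theta')$ cancels because both $\Delta$ and $\Delta^*$ subtract it (see Equations~(\ref{eq:delta2})--(\ref{eq:delta3})). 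Because adding the constant $\Lambda$ (resp.\ $\psi$) to each summand does not change the sample spread, the standard error satisfies $s_X = s_{\Delta^*}$. Finally the moment quantity on the right of the lemma is precisely $6.4\,\mE[|X|^3]+2\,\mE[|X|]$ with $X=\Lambda_i-\Lambda$, matching the bound claimed in the corollary (these population moments are what Algorithm~\ref{alg:our_algorithm} estimates from the sample).

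Next I would carry out the change of variables that converts the Studentized statement into one about $\Delta^*$. Starting from $\sup_x|\mathrm{Pr}(t<x)-\Phi(x)|\le B$ with $t=\bar X/s_X$ and $B$ the right-hand bound, substitute $x=(y-\Delta)/s_{\Delta^*}$. Since $s_{\Delta^*}>0$ this is a strictly increasing bijection of the real line, so the supremum over $x$ equals the supremum over $y$ and $B$ is unchanged. Under this map the event $\{t<x\}$ becomes $\{\bar X<y-\Delta\}=\{\Delta^*<y\}$ and $\Phi(x)$ becomes $\Phi((y-\Delta)/s_{\Delta^*})$, which is exactly the claimed inequality; equivalently, this is Equation~(\ref{eq:clt-bounds2}) after a translation by $\Delta$.

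The step to be careful about is this rescaling, because $s_X$ is itself random, so the map $x\mapsto y=xs_X+\Delta$ is random rather than deterministic. I would therefore state explicitly that the bound is to be read at the realized sample scale $s_{\Delta^*}$: the Studentized form in Lemma~\ref{lem:quant_clt} already absorbs the fluctuation of $s_X$ into the statistic $t$, so comparing the CDF of $\Delta^*$ against the Gaussian of width $s_{\Delta^*}$ (the observed standard error) is precisely what the self-normalized bound licenses. This is the one place where treating $s_X$ as fixed when it is random could break the chain, so I would make the conditioning on the observed $s_{\Delta^*}$, as used in Algorithm~\ref{alg:our_algorithm}, explicit; after that the result is immediate.
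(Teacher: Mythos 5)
Your proposal is correct and takes essentially the same route as the paper: apply Lemma~\ref{lem:quant_clt} to $X_i = \Lambda_i(\theta,\theta')-\Lambda(\theta,\theta')$ with $n=b$, identify $\bar{X} = \Delta^*-\Delta$ and $s_X = s_{\Delta^*}$, then rescale and displace by the mean $\Delta$ (the paper's Equation~(\ref{eq:clt-bounds2}) followed by the translation). Your closing caveat about the randomness of $s_X$ in the change of variables $y = x s_X$ is a legitimate subtlety that the paper's own derivation passes over silently, but it does not change the substance of the argument, which matches the paper's.
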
 
where the upper bound can be expressed as $\epsilon(\theta,\theta',b)$.
Corollary~\ref{cor:our_bound_delta_prime} shows that the distribution of
$\Delta^*$ approximates a Normal distribution with mean $\Delta$ and variance
$s^2_{\Delta^*}$. Furthermore, it bounds the error with \emph{estimable
quantities}: both $\mE[|X|]$ and $\mE[|X|^3]$ can be estimated as means of
$|\Lambda_i - \Lambda|$ and $|\Lambda_i - \Lambda|^3$, respectively, on each
minibatch. We expect this will often be accurate enough on minibatches with
hundreds of points, but otherwise bootstrap CIs can be computed.

\subsection{ADDING RANDOM VARIABLES}\label{ssec:preserve_bounds}

We next relate the CDFs of distributions and show that bounds are preserved
after adding random variables.

\begin{lemma}\label{lem:cdf_bounds}
Let $P(x)$ and $Q(x)$ be two CDFs satisfying
$\sup_x|P(x)-Q(x)|\leq \epsilon$ with $x$ in some real range. Let $R(y)$ be the
{\em density} of another random variable $y$. Let $P'$ be the convolution $P*R$
and $Q'$ be the convolution $Q*R$. Then $P'(z)$ (resp. $Q'(z)$) is the CDF of
sum $z=x+y$ of independent random variables $x$ with CDF $P(x)$ (resp. $Q(x)$)
and y with density $R(y)$.  Then
\begin{equation}
    \sup_x|P'(x)-Q'(x)|\leq \epsilon.
\end{equation}
\end{lemma}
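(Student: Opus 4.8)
The plan is to exploit the fact that convolution with a probability density is non-expansive in the supremum norm. This reduces the whole statement to a single application of the triangle inequality for integrals, together with the two defining properties of a density: nonnegativity and unit mass.

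First I would justify that $P'$ and $Q'$ really are the CDFs of the claimed sums. Conditioning on the value of the independent variable $y$ and integrating against its density $R$ gives, for any threshold $t$,
\[
\Pr(x+y\le t)=\int \Pr(x\le t-y)\,R(y)\,dy=\int P(t-y)\,R(y)\,dy=(P*R)(t)=P'(t),
\]
and identically $\Pr(x'+y\le t)=Q'(t)$ for an independent $x'$ with CDF $Q$. This confirms that $P'$ and $Q'$ are bona fide CDFs and, more importantly, puts them in the integral form to which the hypothesis $\sup_x|P(x)-Q(x)|\le\epsilon$ applies directly under the integral sign.

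Next I would form the difference and push the absolute value inside:
\[
|P'(t)-Q'(t)|=\left|\int\bigl[P(t-y)-Q(t-y)\bigr]R(y)\,dy\right|\le\int\bigl|P(t-y)-Q(t-y)\bigr|R(y)\,dy.
\]
The structural facts doing the work here are that $R\ge 0$, so each factor $R(y)$ stays outside the absolute value as a nonnegative weight, and that $\int R(y)\,dy=1$. Applying the pointwise bound $|P(t-y)-Q(t-y)|\le\epsilon$, valid for every shifted argument, yields $|P'(t)-Q'(t)|\le\epsilon\int R(y)\,dy=\epsilon$. Since $t$ is arbitrary, taking the supremum over $t$ gives the conclusion.

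There is no serious obstacle; the only point demanding care is the appeal to nonnegativity of $R$. For a general signed kernel one would recover only a factor of $\int|R|$ in place of $1$, so the clean bound genuinely relies on $R$ being a true, normalized, nonnegative density. This is immediate from the hypothesis, but it is worth flagging since the correction variable constructed in Section~\ref{sec:correction} can carry small negative values; the bound as stated presumes convolution against an honest density. A secondary technicality is that the conditioning identity implicitly invokes Fubini/Tonelli to interchange integration order, which is legitimate because the integrands are nonnegative and the CDFs are measurable and bounded by $1$.
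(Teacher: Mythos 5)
Your proof is correct and uses essentially the same argument as the paper: write $P'(z)-Q'(z)=\int (P(z-y)-Q(z-y))R(y)\,dy$ and bound it by $\epsilon$ using the nonnegativity and unit mass of $R$ (the paper sandwiches the integral between $\pm\epsilon$ rather than invoking the triangle inequality, a trivial difference). Your added remarks --- verifying the convolution-of-CDF identity and flagging that the bound genuinely requires $R\ge 0$, which matters since the paper's numerically constructed correction density can carry small negative values --- are sound but do not change the core argument.
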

\begin{proof}
See Appendix, Section~\ref{app:proof_cdf_bounds}.
\end{proof}

From Lemma~\ref{lem:cdf_bounds}, we have the following Corollary:

\begin{corollary}\label{cor:bounds_preserved}
If $\sup_y|{\rm Pr}(\Delta^* < y) - \Phi(\frac{y-\Delta}{s_{\Delta^*}})|
\leq \epsilon(\theta,\theta',b)$, then
\[
    \sup_y|{\rm Pr}(\Delta^*+X_{\rm nc}+X_{\rm corr} < y) - S(y-\Delta)| \leq \epsilon(\theta,\theta',b)
\]
where $S(x)$ is the standard logistic function, and $X_{\rm nc}$ and $X_{\rm
corr}$ are generated as per Algorithm 1.
\end{corollary}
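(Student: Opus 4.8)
The plan is to obtain the result by two successive applications of Lemma~\ref{lem:cdf_bounds}, peeling off the two added variables one at a time while tracking how the Gaussian ``comparison'' CDF transforms under each convolution. The starting point is the hypothesis supplied by Corollary~\ref{cor:our_bound_delta_prime}: writing $P(y) = {\rm Pr}(\Delta^* < y)$ and $Q(y) = \Phi\!\left(\frac{y-\Delta}{s_{\Delta^*}}\right)$, we have $\sup_y |P(y) - Q(y)| \le \epsilon(\theta,\theta',b)$, where $Q$ is precisely the CDF of a $\mathcal{N}(\Delta, s_{\Delta^*}^2)$ variable.

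First I would convolve both $P$ and $Q$ with the density of $X_{\rm nc} \sim \mathcal{N}(0, 1 - s_{\Delta^*}^2)$; this is legitimate because Algorithm~\ref{alg:our_algorithm} draws additional minibatch points until $s_{\Delta^*}^2 < 1$, so the variance $1 - s_{\Delta^*}^2$ is nonnegative. By Lemma~\ref{lem:cdf_bounds}, the two convolved CDFs still differ by at most $\epsilon$ in sup-norm. By independence of $X_{\rm nc}$ from $\Delta^*$, the convolution of $P$ is the CDF of $\Delta^* + X_{\rm nc}$. The key computation is that the convolution of $Q$ is the CDF of the sum of independent $\mathcal{N}(\Delta, s_{\Delta^*}^2)$ and $\mathcal{N}(0, 1 - s_{\Delta^*}^2)$ variables; since Gaussian variances add, this is $\mathcal{N}(\Delta, 1)$, whose CDF is $\Phi(y-\Delta)$.

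Next I would convolve both resulting CDFs with the density $C_1$ of $X_{\rm corr}$. Lemma~\ref{lem:cdf_bounds} again preserves the bound $\epsilon$. On the left, this produces the CDF of $\Delta^* + X_{\rm nc} + X_{\rm corr}$, exactly the quantity in the statement. On the right, the comparison CDF becomes $\Phi_1 * C_1$ shifted by $\Delta$; by the defining property of the correction distribution in Equation~(\ref{eq:overall_corr_problem})—that $C_1$ is chosen so the logistic $X_{\rm log}$ decomposes as $X_{\rm norm} + X_{\rm corr}$ in Equation~(\ref{eq:deconvolution}), equivalently $\Phi_1 * C_1 = S$—this equals the logistic CDF $S(y - \Delta)$. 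Combining the two convolutions yields $\sup_y |{\rm Pr}(\Delta^* + X_{\rm nc} + X_{\rm corr} < y) - S(y-\Delta)| \le \epsilon(\theta,\theta',b)$, as claimed.

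The step to watch is the identification of the comparison CDF after each convolution: that the Gaussian variances add to give unit variance after $X_{\rm nc}$, and that the subsequent convolution with $C_1$ reproduces the logistic. I would flag that this last step uses the idealization $\Phi_1 * C_1 = S$ \emph{exactly}; the construction in Section~\ref{sec:correction} only solves this approximately via regularized least squares, so a fully rigorous statement would carry an additional residual $\|\Phi_1 * C_1 - S\|_\infty$ added to $\epsilon$. The Corollary as stated treats the correction distribution as exact, so no such term appears.
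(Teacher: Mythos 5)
Your proof is correct and follows essentially the same route as the paper's: two applications of Lemma~\ref{lem:cdf_bounds}, first convolving with the density of $X_{\rm nc}$ to bring the comparison Gaussian to unit variance $\Phi(y-\Delta)$, then with the correction density to turn it into the logistic $S(y-\Delta)$. Your closing caveat—that the statement implicitly treats $\Phi_1 * C_1 = S$ as exact, whereas Section~\ref{sec:correction} only achieves this up to a small $L_\infty$ residual that would strictly need to be added to $\epsilon$—is a fair observation that the paper's own proof glosses over.
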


\begin{proof}
See Appendix, Section~\ref{app:bounds_preserved}.
\end{proof}

Corollary~\ref{cor:bounds_preserved} shows that the bounds from
Section~\ref{ssec:delta_star_distribution} are preserved after adding random
variables, so our test remains accurate.  In fact we can do better ($O(n^{-1})$
instead of $O(n^{-1/2})$) by using a more precise limit distribution under an
additional assumption. We review this in Appendix~\ref{app:better_error_bound}.

\subsection{BOUNDS ON THE STATIONARY DISTRIBUTION}
Bounds on the error of an M-H test imply bounds on the stationary distribution
of the Markov chain under appropriate conditions. Such bounds were derived in
both~\citep{cutting_mh_2014} and~\citep{icml2014c1_bardenet14}. We include the
result from~\citep{cutting_mh_2014} (Theorem 1) here: Let $d_v(P,Q)$ denote the
total variation distance between two distributions $P$ and $Q$.  Let
$\mathcal{T}_0$ denote the transition kernel of the exact Markov chain,
$\mathcal{S}_0$ denote the exact posterior distribution, and
$\mathcal{S}_{\epsilon}$ denote the stationary distribution of the approximate
transition kernel. 
\begin{lemma}
If $\mathcal{T}_0$ satisfies the contraction condition
$d_v(P\mathcal{T}_0,\mathcal{S}_0) < \eta d_v(P,\mathcal{S}_0)$ for some
constant $\eta\in [0,1)$ and all probability distributions $P$, then
\begin{equation}
      d_v(S_0, S_{\epsilon}) \leq\frac{\epsilon}{1-\eta}
\end{equation}
where $\epsilon$ is the bound on the error in the acceptance test. 
\end{lemma}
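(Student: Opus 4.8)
The plan is to run the standard fixed-point contraction argument on the two stationary distributions, but only after first translating the stated per-step ``error in the acceptance test'' into a bound on the total variation distance between the exact and the approximate transition kernels. Write $\mathcal{T}_\epsilon$ for the approximate kernel, so that $\mathcal{S}_\epsilon\mathcal{T}_\epsilon = \mathcal{S}_\epsilon$ and $\mathcal{S}_0\mathcal{T}_0 = \mathcal{S}_0$ by stationarity. These two fixed-point identities, the triangle inequality, and the contraction hypothesis are the only ingredients needed once the kernel bound is in hand.

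First I would make the acceptance-test error precise. Both kernels share the same proposal $q(\theta'|\theta)$ and differ only in their acceptance probabilities; evaluating Corollary~\ref{cor:bounds_preserved} at $y=0$ shows that the approximate acceptance probability $\alpha_\epsilon(\theta,\theta')$ differs from the exact Barker value $\alpha_0(\theta,\theta') = S(\Delta)$ by at most $\epsilon$ for every pair $(\theta,\theta')$. I would then promote this pointwise probability gap to a kernel bound $d_v(\mathcal{T}_0(\theta,\cdot),\mathcal{T}_\epsilon(\theta,\cdot)) \le \epsilon$ for each fixed $\theta$, and finally to $d_v(P\mathcal{T}_0, P\mathcal{T}_\epsilon) \le \epsilon$ for every distribution $P$ by convexity of total variation in the starting measure.

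Granting the kernel bound, the remaining argument is short. Using stationarity of $\mathcal{S}_\epsilon$ and inserting the intermediate measure $\mathcal{S}_\epsilon\mathcal{T}_0$,
\begin{equation}
d_v(\mathcal{S}_0,\mathcal{S}_\epsilon) = d_v(\mathcal{S}_0, \mathcal{S}_\epsilon\mathcal{T}_\epsilon) \le d_v(\mathcal{S}_0, \mathcal{S}_\epsilon\mathcal{T}_0) + d_v(\mathcal{S}_\epsilon\mathcal{T}_0, \mathcal{S}_\epsilon\mathcal{T}_\epsilon).
\end{equation}
I would bound the first term by applying the contraction hypothesis with $P = \mathcal{S}_\epsilon$, giving $\eta\, d_v(\mathcal{S}_\epsilon, \mathcal{S}_0)$, and control the second term by the kernel bound of the previous paragraph, giving $\epsilon$. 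This produces $d_v(\mathcal{S}_0,\mathcal{S}_\epsilon) \le \eta\, d_v(\mathcal{S}_0,\mathcal{S}_\epsilon) + \epsilon$, and since $\eta < 1$, rearranging yields the stated $d_v(\mathcal{S}_0,\mathcal{S}_\epsilon) \le \epsilon/(1-\eta)$.

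The main obstacle is the per-step kernel bound, because a naive estimate gives $2\epsilon$ rather than $\epsilon$: one contribution comes from the continuous ``accept'' part of $\mathcal{T}(\theta,\cdot)$ and a second from the rejection atom at $\theta$. The savings comes from a cancellation — for a test set $A$ containing $\theta$, the perturbation of the accept mass inside $A$ is exactly offset by the perturbation of the atom, so $|\mathcal{T}_0(\theta,A) - \mathcal{T}_\epsilon(\theta,A)|$ collapses to an integral of $q(\phi|\theta)\,|\alpha_0-\alpha_\epsilon|$ over the complement of $A$ alone, which is at most $\epsilon$; the case $\theta \notin A$ is immediate. Checking this cancellation carefully across both cases is the one genuinely delicate step, and everything else is routine bookkeeping.
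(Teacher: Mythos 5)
Your proof is correct. Note that the paper itself does not prove this lemma at all---it is quoted verbatim from \citep{cutting_mh_2014} (their Theorem 1)---and your argument is essentially the standard proof given in that reference: the fixed-point identities, the triangle inequality through $\mathcal{S}_\epsilon\mathcal{T}_0$, the contraction hypothesis applied with $P=\mathcal{S}_\epsilon$, and the one-step kernel bound. Your handling of the delicate step is also right: the rejection atom's perturbation exactly cancels the accept-mass perturbation inside the test set, so the kernel total variation is $\epsilon$ rather than $2\epsilon$, and your use of Corollary~\ref{cor:bounds_preserved} at $y=0$ is precisely how this paper's test supplies the pointwise acceptance-probability bound that the lemma's hypothesis requires.
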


\section{EXPERIMENTS}\label{sec:experiments}
Here we compare with the most similar prior works
~\citep{cutting_mh_2014} and~\cite{icml2014c1_bardenet14}.
In~\citep{cutting_mh_2014}, an asymptotic CLT is used to argue that a
modified standard M-H test can be used on subsets of the data. This
assumes knowledge of dataset-wide mean $\mu_{\rm std}$ each iteration
(it depends on $\theta$). Determining $\mu_{\rm std}$ exactly requires
a scan over the entire dataset, or some model-specific bounds.
\citep{cutting_mh_2014} also propose a conservative variant which
assumes $\mu_{\rm std}=0$ and avoids the scan.  We refer to the
conservative version as {\sc AustereMH(c)} and the non-conservative
variant as {\sc AustereMH(nc)}. We analyze both in this section.

In~\citep{icml2014c1_bardenet14}
concentration bounds are used with a similar modification to the standard M-H
test ({\sc MHSubLhd} method).  For {\sc MHSubLhd}, the required global
bound is denoted $C_{\theta, \theta'}$ which once again depends on
$\theta$ and so must be recomputed at each step, or estimated in a
model-specific way.   We obtained
sample code for both methods from the authors, and found that both
{\sc AustereMN(nc)} and {\sc MHSubLhd} scanned the entire dataset at
each iteration to derive these bounds. We do not include the cost of
doing this in our experiments, since otherwise there would be no
improvement over testing the full dataset. However, it should be kept
in mind that such bounds must be provided to these methods.
Our test by contrast uses a quantitative form of the CLT
which rely on measurable statistics from a \emph{single} minibatch.
It therefore requires no dataset-wide scans, and can be used, e.g. on
streams of data. 

\begin{figure*}[t]
    \centering
    \includegraphics[width=1.0\linewidth]{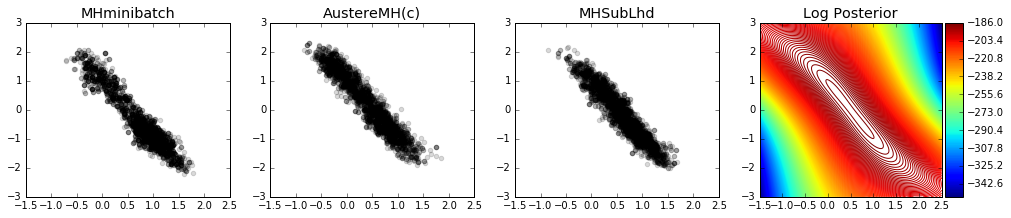}
    \caption{
    The log posterior contours and scatter plots of sampled $\theta$ values
    using different methods. 
    }
    \label{fig:gauss_mix_1}
\end{figure*}
\begin{figure*}[t]
    \centering
    \includegraphics[width=0.9\linewidth]{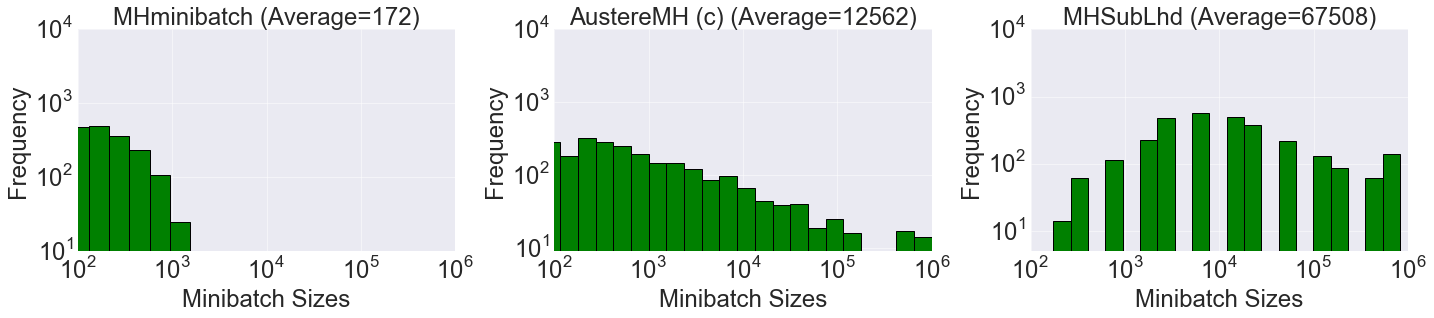}
    \caption{
    Minibatch sizes used in Section~\ref{ssec:gaussians}'s experiment. The axes
    have the same (log-log scale) range.
    }
    \label{fig:gauss_mix_2}
\end{figure*}

In Sections~\ref{ssec:gaussians} and~\ref{ssec:logistic}, we benchmark {\sc
MHminibatch} against {\sc MHSubLhd}, {\sc AustereMH(c)} and {\sc AustereMH(nc)}.
Hyperparameters for the latter were optimized using a grid-search over minibatch
sizes $m$ and per-test thresholds $\epsilon$ described in
Appendix~\ref{sssec:grid_search}. Throughout our descriptions, we refer to a
\emph{trial} as the period when an algorithm collects all its desired samples
$\{\theta_1,\ldots,\theta_T\}$, generally with $T=3000$ or $T=5000$.

\subsection{MIXTURE OF GAUSSIANS}\label{ssec:gaussians}

This model is adapted from~\citep{langevin_2011} by increasing the number of
samples to 1 million.  The parameters are $\theta = \langle \theta_1,\theta_2
\rangle$, and the generation process is
\begin{equation}\label{eq:data_generation}
\begin{split}
    \theta &\sim \mathcal{N}(0, {\rm diag}(\sigma_1^2,\sigma_2^2)) \\
    x_i & \sim 0.5 \cdot \mathcal{N}(\theta_1, \sigma_x^2) + 0.5 \cdot \mathcal{N}(\theta_1+\theta_2, \sigma_x^2).
\end{split}
\end{equation}
We set $\sigma_1^2 = 10, \sigma_2^2 = 1$ and $\sigma_x^2=2$.  We fix $\theta =
\langle 0,1 \rangle$. The original paper sampled 100 data points and estimated
the posterior. We are interested in performance on larger problems and so
sampled 1,000,000 points to form the posterior of
$p(\theta)\prod_{i=1}^{1,000,000}p(x_i | \theta)^{1/K}$ with the same prior from
Equation~(\ref{eq:data_generation}). This produces a much sharper posterior with
two very narrow peaks.  Our goal is to reproduce the original posterior, so we
adjust the temperature to $K=10,000$.  Taking logs, we get the target as shown
in the far right of Figure~\ref{fig:gauss_mix_1}.

We benchmark with {\sc AustereMH(c)} and {\sc MHSubLhd}. We initialized {\sc
MHminibatch} and {\sc MHSubLhd} with $m=50$. For {\sc AustereMH(c)}, we set
the error bound $\epsilon$ to 0.005.  For {\sc MHSubLhd}, we increase
sizes geometrically with $\gamma = 1.5$ and use parameters $p = 2, \delta =
0.01$.  All methods collect 3000 samples using a random walk proposer with
covariance matrix ${\rm diag}(0.15, 0.15)$, which means the M-H test is
responsible for shaping the sample distribution.

Figure~\ref{fig:gauss_mix_1} shows scatter plots of the resulting $\theta$
samples for the three methods, with darker regions indicating a greater density
of points. There are no obvious differences, showing that {\sc MHminibatch}
reaches an acceptable posterior. We further measure the similarity between each
set of samples and the actual posterior. Due to space constraints, results are
in Appendix~\ref{sssec:gauss_metrics}.

Figure~\ref{fig:gauss_mix_2} shows that {\sc MHminibatch} dominates in terms of
speed and efficiency. The histograms of the (final) minibatch sizes used each
iteration show that our method consumes significantly less data; the
distribution is short-tailed and the mean is 172, more than an order of
magnitude better compared to the other two methods (averages are 12562 and
67508). We further ran 10 runs of mixture of Gaussians experiments and report
minibatch sizes in Table~\ref{tab:gaussianres}. Sizes correspond to the running
times of the methods, excluding the likelihood computation of all data points
for {\sc AustereMH(nc)} and {\sc MHSubLhd}, which would drastically increase
    running time.

\begin{table}[t]
\caption{Average minibatch sizes ($\pm$ one standard deviation) on the Gaussian
mixture model. The averages are taken over 10 independent trials (3000 samples
each).}
\label{tab:gaussianres}
\vskip 0.15in
\begin{center}
\begin{tabular}{l l l l}
\textbf{Method} & \textbf{Average of MB Sizes} \\
\hline \\
{\sc MHminibatch}   & $182.3\pm 11.4$ \\
{\sc AustereMH(c)}  & $13540.5\pm 1521.4$ \\
{\sc MHSubLhd}      & $65758.9\pm 3222.6$ \\
\end{tabular}
\end{center}
\vskip -0.1in
\end{table}

\subsection{LOGISTIC REGRESSION}\label{ssec:logistic}

\begin{figure*}[t]
	\centering
	\includegraphics[width=1.0\linewidth]{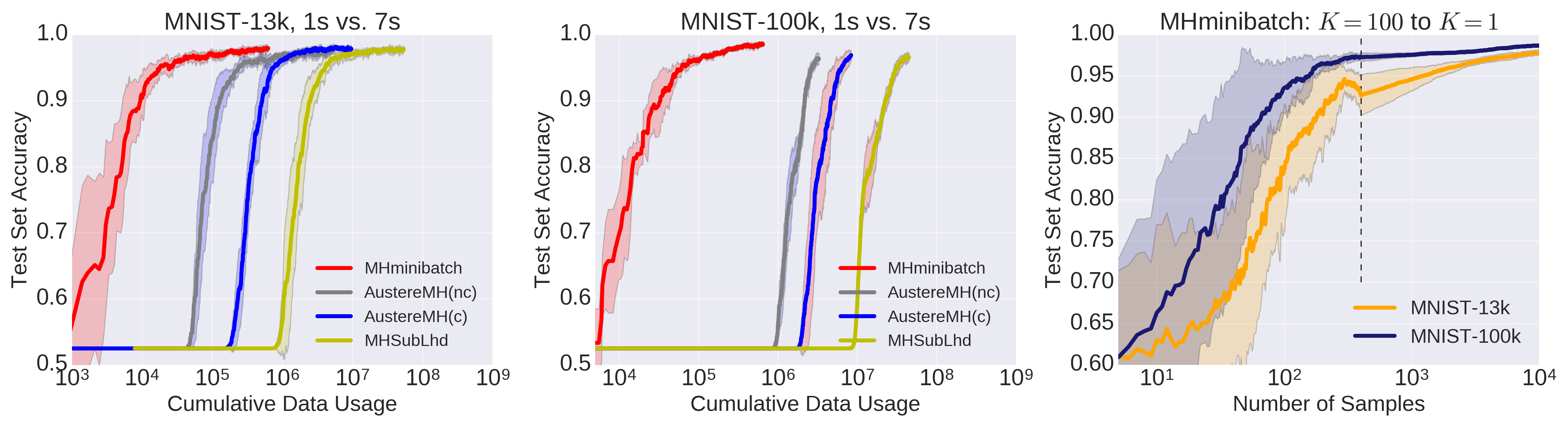}
	\caption{
    Binary classification accuracy of the MCMC methods on the 1s vs 7s logistic
    regression task for MNIST-13k (left plot) and MNIST-100k (middle plot) as a
    function of cumulative data usage.  The right plot reports performance of
    {\sc MHminibatch} on both datasets when the temperature starts at 100 and
    drops to 1 after a ``burn-in'' period of 400 samples (vertical dashed line)
    of $\theta$.  For all three plots, one standard deviation is indicated by
    the shaded error regions.
    }
	\label{fig:logistic_performance}
\end{figure*}

\begin{figure*}[t]
	\centering
    \includegraphics[width=0.9\linewidth]{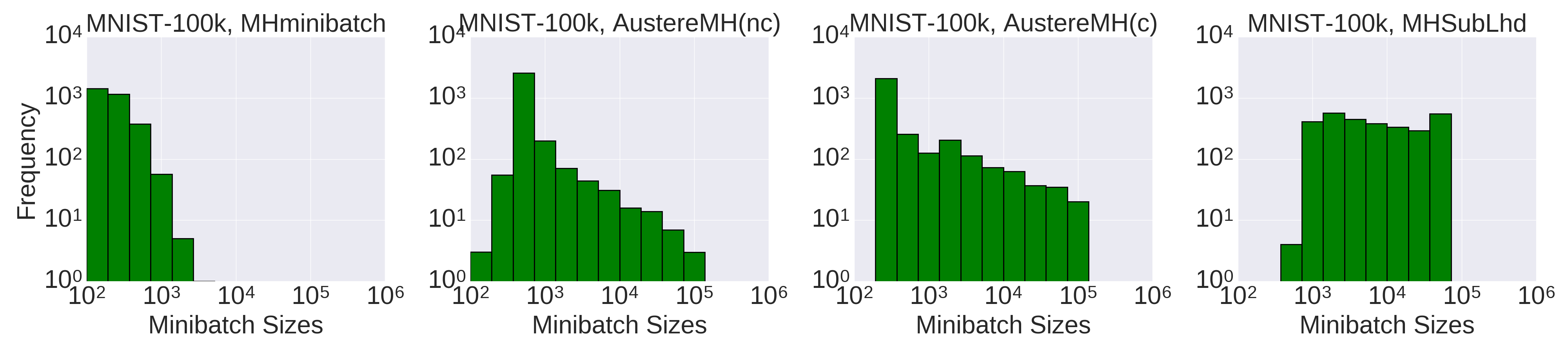}
	\caption{
    Minibatch sizes for a representative trial of logistic regression on
    MNIST-100k (analogous to Figure~\ref{fig:gauss_mix_2}). Both axes are on a
    log scale and have the same ranges across the three histograms.  See
    Section~\ref{ssec:logistic} for details.
    }
	\label{fig:logistic_histograms_mnist8m}
\end{figure*}

We next test logistic regression for the binary classification of 1s versus 7s
on the MNIST~\citep{lecun-mnisthandwrittendigit-2010} dataset and (a subset of)
infinite MNIST~\citep{loosli-canu-bottou-2006}. For the former, extracting all 1s
and 7s resulted in 13,000 training samples, and for the latter, we used 87,000
additional (augmented) 1s and 7s to get 100,000 training samples. Both datasets
use the same test set, with 2,163 samples. Henceforth, we call them MNIST-13k
and MNIST-100k, respectively.

For all methods, we impose a uniform prior on $\theta$ and again use a random
walk proposer, with covariance matrix $0.05I$ for MNIST-13k and $0.01I$ for
MNIST-100k. The default temperature setting is a constant at $K=100$ for
MNIST-13k and MNIST-100k. Performance of all methods implicitly relies on the
step size and temperature. Setting temperature too low or step size too high
will result in slow convergence for all methods. For MNIST-13k, each method
generated 5000 samples for ten independent trials; due to MNIST-100k's higher
computational requirements, the methods generated 3000 samples for five
independent trials.  For additional parameter settings and an investigation on
tuning step sizes, see Appendix~\ref{app:logistic}.

For {\sc MHSubLhd}, we tried to use the provided symbolic bound for
$C_{\theta,\theta'}$ described in~\citep{icml2014c1_bardenet14}, but it was too
high and provided no performance benefit. Instead we use the empirical
$C_{\theta,\theta'}$ from the entire dataset. 

The first two subplots of Figure~\ref{fig:logistic_performance} display the
prediction accuracy on both datasets for all methods as a function of the
cumulative training points processed.\footnote{The curves do not span the same
length over the x-axis since the methods consume different amounts of data.} To
generate the curves, for each of the sampled vectors $\theta_t$,
$t\in\{1,\ldots,T\}$, we use $\theta_t$ as the logistic regression parameter.
The results indicate that our test is more efficient, obtaining convergence more
than an order of magnitude faster than {\sc AustereMH(nc)} and several orders of
magnitude compared to {\sc AustereMH(c)} and {\sc MHSubLhd}.  We also observe
the advantage of having higher temperature from the third plot in
Figure~\ref{fig:logistic_performance}, which plots average performance and one
standard deviation for {\sc MHminibatch} over 10 trials. During the exploration
period, the accuracy rapidly increases, and then after 400 samples, we switch
the temperature to 1, but this requires the step size to decrease, hence the
smaller changes in accuracy.

Figure~\ref{fig:logistic_histograms_mnist8m} shows log-log histograms of
minibatch sizes for the methods on MNIST-100k.
(Figure~\ref{fig:logistic_histograms_mnist} in Appendix~\ref{app:logistic}
contains results for MNIST-13k.) The histograms only represent one
representative trial; Table~\ref{tab:logistic} contains the average of the
average minibatch sizes ($\pm$ one standard deviation) across all trials. {\sc
MHminibatch}, with average minibatch sizes of 125.4 and 216.5 for MNIST-13k and
MNIST-100k, respectively, consumes more than 7x and 4x fewer data points than
the next-best method, {\sc AustereMH(nc)}.  We reiterate, however, that both
{\sc AustereMH(nc)} and {\sc MHSubLhd} require computing $\log p(x_i|\theta)$
and $\log p(x_i|\theta')$ for all $x_i$ each iteration. Our results here do not
count that extra data consumption. Only our method and {\sc AustereMH(c)} rely
solely on the minibatch of data each iteration.

\begin{table}[t]
\caption{Average minibatch sizes ($\pm$ one standard deviation) on logistic
regression on MNIST-13k and MNIST-100k.  The averages are taken over 10
independent trials (5000 samples each) for MNIST-13k and 5 independent trials
(3000 samples each) for MNIST-100k.}
\small 
\label{tab:logistic}
\vskip 0.15in
\begin{center}
\begin{tabular}{l l l l}
\textbf{Method/Data} & \textbf{MNIST-13k} & \textbf{MNIST-100k}  \\
\hline \\
{\sc MHminibatch}   & $125.4\pm 9.2$    & $216.5 \pm 7.9$ \\
{\sc AustereMH(nc)} & $973.8\pm 49.8$   & $1098.3 \pm 44.9$ \\
{\sc AustereMH(c)}  & $1924.3\pm 52.4$  & $2795.6 \pm 364.0$ \\
{\sc MHSubLhd}      & $10783.4\pm 78.9$ & $14977.3 \pm 582.0$ \\
\end{tabular}
\end{center}
\vskip -0.1in
\end{table}

\section{CONCLUSIONS AND DISCUSSIONS}\label{sec:conclusion}

We have derived an M-H test for minibatch MCMC which approximates full data
tests. We present theoretical results and experimentally show the benefits of
our test on Gaussian mixtures and a logistic regression experiment.

A priority is to extend our work to methods such as Hamiltonian Monte Carlo and Langevin Dynamics
which use efficient but asymmetric proposals. While there are various approaches
to symmetrizing these proposals, they have high cost in the context of minibatch
MCMC. Instead we plan to extend our method to log proposal ratios which have
similar structure (whole-dataset mean plus additive noise) to the log probability
ratio. These can be similarly absorbed in the Barker test. 

Other possibilities for future work include integrating our algorithm
with~\citep{cutting_mh_2014} by applying both tests each iteration, utilizing
the variance reduction techniques suggested in~\citep{DBLP:conf/icml/ChenG16},
and providing recipe for how to use our algorithm following the framework
of~\citep{sgmcmc_2015}.

\newpage
\bibliography{paper}
\bibliographystyle{plainnat}

\clearpage
\appendix


\onecolumn
\begin{center}
{\Large Appendix}
\end{center}

This appendix is divided into three major sections. Appendix~\ref{app:proofs} provides
the proofs that we omitted from the main text due to space constraints.
Appendix~\ref{app:why_barker} elaborates on our choice of the Barker logistic
function. Finally, Appendix~\ref{app:experiments} presents further details on the correction
distribution numerical derivation and on our three main experiments to assist
understanding and reproducibility.  


\section{PROOFS OF LEMMAS AND COROLLARIES}\label{app:proofs}

\subsection{PROOF OF LEMMA~\ref{lem:worst_case}}\label{app:worst_case_proof}

Choose $(\theta' - \theta) \in \pm\frac{1}{\sqrt{N}}[0.5,1]$ (event 1) and
$(\theta -0.5) \in \pm\frac{1}{\sqrt{N}}[0.5,1]$ filtered for  matching sign
(event 2).  As discussed in Lemma~\ref{lem:worst_case}, both $q(\theta' |
\theta)$ and $p(\theta | x_1,\ldots,x_N)$ have variance $1/N$. If we denote
$\Phi$ as the CDF of the standard normal distribution, then the former event
occurs with probability $p_0 = 2(\Phi(\sqrt{N}\frac{1}{\sqrt{N}}) -
\Phi(\sqrt{N}\frac{0.5}{\sqrt{N}})) = 2(\Phi(1)-\Phi(0.5)) \approx 0.2997$. The
latter event, because we restrict signs, occurs with probability $p_1 =
\Phi(1)-\Phi(0.5) \approx 0.14988$. 

These events together guarantee that $\Lambda^*(\theta,\theta')$ is negative by
inspection of Equation (\ref{eq:whyneg}) below.  This implies that we can find a
$u \in (0,1)$ so that $\psi(u,\theta,\theta') = \log u < 0$ equals
$\mE[\Lambda^*(\theta,\theta')]$.  Specifically, choose $u_0$ to satisfy $\log u_0
= \mE[\Lambda^*(\theta,\theta')]$.  Using $\mE[x_i^*] = 0.5$ and
Equation~(\ref{eq:lemma_ll_ratio}), we see that
\begin{equation}\label{eq:whyneg}
    \log u_0 = N(\theta'-\theta)\frac{1}{b} \cdot \mE\left[\sum_{i=1}^b x_i^*-\theta-\frac{\theta'-\theta}{2}\right]
    = -N(\theta'-\theta)\left(\theta-0.5+\frac{\theta'-\theta}{2}\right).
\end{equation}
Next, consider the minibatch acceptance test $\Lambda^*(\theta,\theta')
\not\approx \psi(u,\theta,\theta')$ used in ~\citep{cutting_mh_2014}
and~\citep{icml2014c1_bardenet14} , where $\not\approx$ means ``significantly
different from'' under the distribution over samples. This is
\begin{align}
\Lambda^*(\theta,\theta')  \not\approx \psi(u_0,\theta,\theta') 
&\iff N(\theta'-\theta) \cdot \frac{1}{b}\sum_{i=1}^b x_i^*-\theta-\frac{\theta'-\theta}{2} \not\approx \log u_0\\
&\iff \frac{1}{b}\sum_{i=1}^b x_i^*-\left(\theta+\frac{\theta'-\theta}{2} + \frac{\log u_0}{N(\theta'-\theta)}\right) \not\approx  0 \\
&\iff \frac{1}{b}\sum_{i=1}^b x_i^*-0.5 \not\approx 0. \label{eq:accept_test_zero}
\end{align}
Since the $x_i^*$ have mean 0.5, the resulting test with our chosen $u_0$ will
never correctly succeed and must use all $N$ data points.  Furthermore, if we
sample values of $u$ near enough to $u_0$, the terms in parenthesis will not be
sufficiently different from 0.5 to allow the test to succeed. 
  
The choices above for $\theta$ and $\theta'$ guarantee that
\begin{equation}\label{eq:log_uo_range}
    \log u_0 \in -[0.5,1][0.75,1.5] = [-1.5, -0.375].
\end{equation}
Next, consider the range of $u$ values near $u_0$:
\begin{equation}\label{eq:log_u_range}
    \log u \in \log u_0 + [-0.5,0.375].
\end{equation}
The size of the range in $u$ is at least $\exp([-2,-1.125]) \approx
[0.13534,0.32465]$ and occurs with probability at least $p_2=0.18932$. With $u$
in this range, we rewrite the test as:
\begin{equation}\label{eq:accept_test_rewritten}
    \frac{1}{b}\sum_{i=1}^b x_i^*-0.5 \hspace{0.1in} \not\approx \hspace{0.1in} \frac{\log u/u_0}{N(\theta'-\theta)}
\end{equation}
so that, as in Equation~(\ref{eq:accept_test_zero}), the LHS has expected value
zero.  Given our choice of intervals for the variables, we can compute the range
for the right hand side (RHS) assuming\footnote{If $\theta'-\theta<0$, then the
range would be $\frac{1}{\sqrt{N}}[-0.75,1]$ but this does not matter for
the purposes of our analysis.} that $\theta'-\theta > 0$:
\begin{equation}\label{eq:rhs_range}
\min\{{\rm RHS}\} = \frac{-0.5}{\sqrt{N} \cdot 0.5} = -\frac{1}{\sqrt{N}}
\quad {\rm and} \quad \max\{{\rm RHS}\} = \frac{0.375}{\sqrt{N} \cdot 0.5} = \frac{0.75}{\sqrt{N}}
\end{equation}
Thus, the RHS is in $\frac{1}{\sqrt{N}}[-1,0.75]$.  The standard deviation of
the LHS given the interval constraints is at least $0.5/\sqrt{b}$.
Consequently, the gap between the LHS and RHS in
Equation~(\ref{eq:accept_test_rewritten}) is at most $2\sqrt{b/N}$ standard
deviations, limiting the range in which the test will be able to ``succeed''
without requiring more samples.

The samples $\theta$, $\theta'$ and $u$ are drawn independently and so the
probability of the conjunction of these events is $c = p_0 p_1 p_2 = 0.0085$.

\subsection{PROOF OF LEMMA~\ref{lem:quant_clt}}\label{app:quant_clt}

The following bound is given immediately after Corollary 2 from~\citep{explicit-clt05}:
\begin{equation}
-6.4\mE[|X|^3]-2\mE[|X|] \leq \sup_x|{\rm Pr}(t<x)-\Phi(x)|{\sqrt{n}} \leq
1.36\mE[|X|^3].
\end{equation}
This bound applies to $x\geq 0$. Applying the bound to $-x$ when $x<0$
and combining with $x>0$, we obtain the weaker but unqualified bound
in Equation~(\ref{eq:clt-bounds}).

\subsection{PROOF OF LEMMA~\ref{lem:cdf_bounds}}\label{app:proof_cdf_bounds}

We first observe that
\[
    P'(z) - Q'(z) = \int_{-\infty}^{+\infty}(P(z-x)-Q(z-x))R(x) dx,
\]
and since $\sup_x|P(x)-Q(x)|\leq \epsilon$ it follows that $\forall z$:
\begin{equation}
-\epsilon = \int_{-\infty}^{+\infty} -\epsilon R(x) dx \leq \int_{-\infty}^{+\infty}(P(z-x)-Q(z-x))R(x) dx \leq \int_{-\infty}^{+\infty}\epsilon R(x) dx = \epsilon,
\end{equation}
as desired.

\subsection{PROOF OF COROLLARY~\ref{cor:bounds_preserved}}\label{app:bounds_preserved}

We apply Lemma~\ref{lem:cdf_bounds} twice. First take:
\begin{equation}
    P(y) = {\rm Pr}(\Delta^* < y)
     \quad {\rm and} \quad Q(y) = \Phi\left(\frac{y-\Delta}{s_{\Delta^*}}\right)
\end{equation}
and convolve with the distribution of $X_n$ which has density $\phi(X/\sigma_n)$
where $\sigma_n^2 = 1 - s^2_{\Delta^*}$. This yields the next iteration of $P$
and $Q$:
\begin{equation}
    P'(y) = {\rm Pr}(\Delta^*+X_{\rm nc} < y)
    \quad {\rm and}\quad Q'(y) = \Phi\left({y-\Delta}\right)
\end{equation}
Now we convolve with the distribution of $X_{\rm corr}$:
\begin{equation}
    P''(y) = {\rm Pr}(\Delta^*+X_{\rm nc}+X_{\rm corr} < y)
    \quad {\rm and}\quad Q''(y) = S\left({y-\Delta}\right)
\end{equation}
Both steps preserve the error bound $\epsilon(\theta,\theta',b)$. Finally
$S(y-\Delta)$ is a logistic CDF centered at $\Delta$, and so $S(y-\Delta) = {\rm
Pr}(\Delta + X_{\rm log} < y)$ for a logistic random $X_{\rm log}$. We conclude
that the probability of acceptance for the actual test ${\rm Pr}(\Delta^*+X_{\rm
nc}+X_{\rm corr} > 0)$ differs from the exact test ${\rm Pr}(\Delta+X_{\rm log}
> 0)$ by at most $\epsilon$.

\subsection{IMPROVED ERROR BOUNDS BASED ON SKEW ESTIMATION}\label{app:better_error_bound}

We show that the CLT error bound can be improved to $O(n^{-1})$ using a more
precise limit distribution under an additional assumption. Let $\mu_i$ denote
the $i^{th}$ moment, and $b_i$ denote the $i^{th}$ absolute moment of $X$. If
Cramer's condition holds:
\begin{equation}\label{eq:cramers_condition}
    \lim_{t \to \infty} \sup |\mE[\exp(i t X)]| < 1,
\end{equation}
then Equation 2.2 in Bentkus et al.'s work on Edgeworth
expansions~\citep{Bentkus97} provides:

\begin{lemma}\label{lem:clt_edgeworth}
Let $X_1,\ldots,X_n$ be a set of zero-mean, independent, identically-distributed
random variables with sample mean $\hat{X}$ and with $t$ defined as in Lemma 3.
If $X$ satisfies Cramer's condition, then
\[
    \sup_x\left|{\rm Pr}(t<x) - G\left(x, \frac{\mu_3}{b_2^{3/2}}\right)\right| \leq \frac{c(\epsilon,b_2,b_3,b_4,b_{4+\epsilon})}{n}
\]
where
\begin{equation}
    G_n(x,y) = \Phi(x) + \frac{y(2x^2+1)}{6\sqrt{n}}\Phi'(x).
\end{equation}
\end{lemma}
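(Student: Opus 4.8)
The plan is to obtain this as a specialization of the general one-term Edgeworth expansion for a self-normalized sum, so that essentially all of the analytic work is already carried out in~\citep{Bentkus97}. First I would identify $t = \bar{X}/s_X$ (as in Lemma~\ref{lem:quant_clt}) as the Studentized mean, i.e. the sample mean normalized by the \emph{sample} standard deviation rather than the population one. The target claim is then precisely the statement that, under Cramer's condition, the CDF of $t$ agrees with its one-term Edgeworth polynomial $G_n$ up to a remainder of order $n^{-1}$, with the remainder constant controlled by the low-order absolute moments $b_2,b_3,b_4,b_{4+\epsilon}$.

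The core of the argument is to match notation and verify the shape of the correction polynomial. I would first note that for zero-mean $X$ we have $b_2 = \mE[X^2] = \sigma^2$, so $b_2^{3/2} = \sigma^3$ and the argument $y = \mu_3/b_2^{3/2}$ is exactly the skewness $\mE[X^3]/\sigma^3$. The correction term $\frac{y(2x^2+1)}{6\sqrt{n}}\Phi'(x)$ then reads $\frac{\mu_3(2x^2+1)}{6\sigma^3\sqrt{n}}\phi(x)$, which is the known first Edgeworth correction for the Studentized statistic. The step I would emphasize is \emph{why} the polynomial is $(2x^2+1)$ rather than the $-(x^2-1)$ that appears for the standardized mean $\sqrt{n}\,\bar{X}/\sigma$: Studentization replaces $\sigma$ by the random $s_X$, and a stochastic (delta-method) expansion of $1/s_X$ about $1/\sigma$ contributes an additional skewness-dependent term whose covariance with $\bar{X}$ brings in $\mE[X^3]$; this combines with the standardized-mean correction to produce $(2x^2+1)$ and to flip its sign. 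Carrying this bookkeeping and confirming it reproduces Equation~2.2 of~\citep{Bentkus97} in our notation is the main piece of verification.

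Finally I would address the remainder. The role of Cramer's condition in Equation~(\ref{eq:cramers_condition}) is exactly to rule out lattice behavior and force the characteristic function of $X$ to decay away from the origin; this is what upgrades the accuracy from the Berry--Esseen rate $O(n^{-1/2})$ (as in Lemma~\ref{lem:quant_clt}) to $O(n^{-1})$. Granting the cited expansion, the uniform-in-$x$ remainder bound with a constant depending only on $b_2,b_3,b_4,b_{4+\epsilon}$ follows directly, and taking the supremum over $x$ gives the stated inequality.

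The hard part will be the polynomial matching described above: the Studentized and standardized Edgeworth corrections differ, and getting the $(2x^2+1)$ coefficient exactly right (including the combinatorial factor $1/6$) requires carefully tracking the joint expansion of numerator and denominator rather than simply quoting the standardized-mean formula. A secondary obstacle, in the \emph{application} of the lemma, is verifying that the relevant variable $X = \Lambda_i - \Lambda$ actually satisfies Cramer's condition; for continuous log-likelihood ratios this holds generically, but it must be checked or assumed, since the $O(n^{-1})$ rate fails without it.
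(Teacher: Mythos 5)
Your proposal is correct and takes essentially the same approach as the paper: the paper offers no independent derivation, simply quoting Equation~2.2 of~\citep{Bentkus97} as directly providing the stated bound under Cramer's condition. Your extra bookkeeping --- verifying that the correction polynomial is the Studentized form $(2x^2+1)$ rather than the standardized-mean form, and noting that Cramer's condition is what upgrades the rate to $O(n^{-1})$ --- is sound and goes beyond what the paper records, but it is the same route.
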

Lemma~\ref{lem:clt_edgeworth} shows that the average of the $X_i$ has a more
precise, skewed CDF limit $G_n(x,y)$ where the skew term has weight proportional
to a certain measure of skew derived from the moments:
$\mu_3/b_2^{3/2}$. Note that if the $X_i$ are symmetric, the weight of
the correction term is zero, and the CDF of the average of the $X_i$ converges
to $\Phi(x)$ at a rate of $O(n^{-1})$.

Here the limit $G_n(x,y)$ is a normal CDF plus a correction term that decays as
$n^{-1/2}$.
Importantly, since $\phi^{''}(x) = x^2\phi(x) - \phi(x)$ where
$\phi(x)=\Phi'(x)$, the correction term can be rewritten giving:
\begin{equation}\label{eq:GNderivatives}
    G_n(x,y) = \Phi(x) + \frac{y}{6\sqrt{n}}(2\phi^{''}(x)+3\phi(x))
\end{equation}
From which we see that $G_n(x,y)$ is a linear combination of $\Phi(x)$,
$\phi(x)$ and $\phi^{''}(x)$. In Algorithm 1, we
correct for the difference in $\sigma$ between $\Delta^*$ and the variance
needed by $X_{\rm corr}$ using $X_{\rm nc}$. This same method works when we
wish to estimate the error in $\Delta^*$ vs $G_n(x,y)$. Since all of the
component functions of $G_n(x,y)$ are derivatives of a (unit variance)
$\Phi(x)$, adding a normal variable with variance $\sigma'$ increases the
variance of all three functions to $1+\sigma'$. Thus we add $X_{\rm nc}$ as
per Algorithm 1 preserving the limit in Equation~(\ref{eq:GNderivatives}).

The deconvolution approach can be used to construct a correction variable
$X_{\rm corr}$ between $G_n(x,y)$ and $S(x)$ the standard logistic function. An
additional complexity is that $G_n(x,y)$ has additional parameters $y$ and $n$.
Since these act as a single multiplier $\frac{y}{6\sqrt{n}}$ in
Equation~(\ref{eq:GNderivatives}), its enough to consider a function $g(x,y')$
parametrized by $y'= \frac{y}{6\sqrt{n}}$. This function can be computed and
saved offline. As we have shown earlier, errors in the ``limit'' function
propagate directly through as errors in the acceptance test.  To achieve a test
error of $10^{-6}$ (close to single floating point precision), we need a $y'$
spacing of $10^{-6}$. It should not be necessary to tabulate values all the way to
$y'=1$, since $y'$ is scaled inversely by the square root of minibatch size.
Assuming a max $y'$ of 0.1 requires us to tabulate about 100,000.  Since our $x$
resolution is 10,000, this leads to a table with about 1 billion values, which
can comfortably be stored in memory.  However, if $g(x,y)$ is moderately smooth
in $y$, it should be possible to achieve similar accuracy with a much smaller
table. We leave further analysis and experiments with $g(x,y)$ as future work.

\section{WHY THE BARKER LOGISTIC FUNCTION?}\label{app:why_barker}

Regarding our choice of the Logistic function, a test function $f(x)$ for
Metropolis-Hastings must satisfy Lemma~\ref{lem:detailed_balance}. In addition,
it must be monotone, bounded by $[0,1]$ and be such that $\lim_{x \to
-\infty}f(x) = 0$ and $\lim_{x \to \infty}f(x) = 1$. While many functions
satisfy this, including the classical test $f(x) = \min\{\exp(x), 1\}$, the
Logistic function is the \emph{unique} function in this class which is
anti-symmetric about 0.5, so it represents the (unique) CDF of a symmetric
random variable. Our method requires approximating this with the sum of a
Gaussian random variable (which is symmetric) and a correction. The Logistic CDF
$L$ and Gaussian CDF $\Phi$ are extremely close even without correction; more
precisely, the CDF error from the closest Gaussian CDF --- which we numerically
determined to have standard deviation approximately 1.7 --- satisfies $\sup_x
|L(x) - \Phi(x/1.7)| < 0.01$. Said another way, the error between the Logistic
and Gaussian CDFs is less than 1\%. With our correction we can make this error
orders of magnitude smaller. 

While not a proof of optimality, it is unlikely that a non-symmetric test
function $f(x)$ --- representing a skewed variable --- would do better. It would
require a highly-skewed correction variable, and likely require a much narrower
normal distribution (and hence more samples).

\section{ADDITIONAL EXPERIMENT DETAILS}\label{app:experiments}

\subsection{OBTAINING THE CORRECTION DISTRIBUTION (SECTION~\ref{sec:correction})}\label{app:correction}

\begin{table}[t]
\caption{Errors ($L_\infty$) in $X_{\rm norm}+X_{\rm corr}$ versus $X_{\rm
log}$, with $N=4000$ (top row) and $N=2000$ (bottom row).}
\label{tab:xcorr}
\vskip 0.15in
\begin{center}
\begin{small}
\begin{tabular}{l l}
\hline
\T\B  $N=2000$ & $\sigma=0.8$ \rule{0pt}{3ex} \\
\hline
\T\B $\lambda$ & $L_{\infty}$ error \\
\hline
100    & 2.6e-3 \T \\
10     & 4.0e-4  \\
1      & 6.7e-5  \\
0.1    & 1.4e-5  \\
0.01   & \textbf{5.0e-6} \B \\
\hline
\end{tabular}
\quad 
\begin{tabular}{l l}
\hline
\T\B $N=2000$ & $\sigma=0.9$ \\
\hline
\T\B $\lambda$ & $L_{\infty}$ error \\
\hline
100    & 3.3e-3 \T \\
10     & 6.4e-4  \\
1      & 1.6e-4  \\
0.1    & \textbf{1.3e-4}  \\
0.01   & 2.7e-4 \B \\
\hline
\end{tabular}
\quad 
\begin{tabular}{l l}
\hline
\T\B $N=2000$ & $\sigma=1.0$ \\
\hline
\T\B $\lambda$ & $L_{\infty}$ error \\
\hline
100    & 4.4e-3 \T \\
10     & 1.3e-3  \\
1      & \textbf{1.1e-3}  \\
0.1    & 2.0e-3  \\
0.01   & 3.6e-3 \B \\
\hline
\end{tabular}
\quad 
\begin{tabular}{l l}
\hline
\T\B $N=2000$ & $\sigma=1.1$ \\
\hline
\T\B $\lambda$ & $L_{\infty}$ error \\
\hline
100    & 6.8e-3 \T \\
10     & \textbf{4.6e-3}  \\
1      & 7.5e-3  \\
0.1    & 1.3e-2  \\
0.01   & 2.4e-2 \B \\
\hline
\end{tabular}
\vskip 0.2in 
\begin{tabular}{l l}
\hline
\T\B $N=4000$ & $\sigma=0.8$ \\
\hline
\T\B $\lambda$ & $L_{\infty}$ error \\
\hline
100    & 8.3e-4 \T \\
10     & 1.3e-4  \\
1      & 2.5e-5  \\
0.1    & \textbf{6.7e-6}  \\
0.01   & 7.4e-6 \B \\
\hline
\end{tabular}
\quad 
\begin{tabular}{l l}
\hline
\T\B $N=4000$ & $\sigma=0.9$ \\
\hline
\T\B $\lambda$ & $L_{\infty}$ error \\
\hline
100    & 1.2e-3 \T \\
10     & 2.6e-4  \\
1      & \textbf{1.0e-4}  \\
0.1    & 2.0e-4  \\
0.01   & 3.9e-4 \B \\
\hline
\end{tabular}
\quad 
\begin{tabular}{l l}
\hline
\T\B $N=4000$ & $\sigma=1.0$ \\
\hline
\T\B $\lambda$ & $L_{\infty}$ error \\
\hline
100    & 1.9e-3 \T \\
10     & \textbf{8.9e-4}  \\
1      & 1.6e-3  \\
0.1    & 2.8e-3  \\
0.01   & 5.2e-3 \B \\
\hline
\end{tabular}
\quad 
\begin{tabular}{l l}
\hline
\T\B $N=4000$ & $\sigma=1.1$ \\
\hline
\T\B $\lambda$ & $L_{\infty}$ error \\
\hline
100    & \textbf{4.3e-3} \T \\
10     & 6.0e-3  \\
1      & 1.0e-2  \\
0.1    & 1.2e-2  \\
0.01   & 3.5e-2 \B \\
\hline
\end{tabular}
\end{small}
\end{center}
\vskip -0.1in
\end{table}

In Section~\ref{sec:correction}, we described our derivation of the correction
distribution $C_\sigma$ for random variable $X_{\rm corr}$.
Table~\ref{tab:xcorr} shows our $L_\infty$ error results for the convolution
(Equation~(\ref{eq:overall_corr_problem})) based on various hyperparameter
choices.  We test using $N=2000$ and $N=4000$ points for discretization within
a range of $X_{\rm corr} \in [-20,20]$, covering essentially all the probability
mass. We also vary $\sigma$ from 0.8 to 1.1. 

We observe the expected tradeoff. With smaller $\sigma$, our $C_\sigma$ is
closer to the ideal distribution (as judged by $L_\infty$ error), but this
imposes a stricter upper bound on the sample variance of $\Delta^*$ before our
test can be applied, which thus results in larger minibatch sizes. Conversely, a
more liberal upper bound means we avail ourselves of smaller minibatch sizes,
but at the cost of a less stable derivation for $C_\sigma$.

We chose $N=4000, \sigma=1$, and $\lambda = 10$ to use in our experiments, which
empirically exhibits excellent performance. This is reflected in the description
of {\sc MHminibatch} in Algorithm~\ref{alg:our_algorithm}, which assumes that we
used $\sigma=1$ but we reiterate that the choice is arbitrary so long as $0 <
\sigma < \sqrt{\pi^2/3} \approx 1.814$, the standard deviation of the standard
logistic distribution, since there must be some variance left over for $X_{\rm
corr}$.

\subsection{GAUSSIAN MIXTURE MODEL EXPERIMENT (SECTION~\ref{ssec:gaussians})}\label{app:gaussians}

\begin{table}[t]
\caption{Gaussian Mixture Model statistics ($\pm$ one standard deviation over 10
trials).}
\small
\label{tab:poissons}
\vskip 0.15in
\begin{center}
\begin{tabular}{l l l l}
\textbf{Metric/Method} & {\sc MHminibatch} & {\sc AustereMH(c)} & {\sc MHSubLhd} \\
\hline
\T Equation~\ref{eq:log_prob_poissons} & $-1307.0 \pm 229.5$ & $-1386.9 \pm 322.4$ & $-1295.1 \pm 278.0$ \\
Chi-Squared & $4502.3 \pm 1821.8$ & $ 5216.9 \pm 3315.8$ & $ 3462.3 \pm 1519.5$ \\
\end{tabular}
\end{center}
\vskip -0.1in
\end{table}

\subsubsection{Grid Search}\label{sssec:grid_search}

For the Gaussian mixture experiment, we use the conservative method
from~\citep{cutting_mh_2014}, which avoids the need for recomputing log
likelihoods of each data point each iteration by choosing baseline minibatch
sizes $m$ and per-test thresholds $\epsilon$ beforehand, and then using those
values for the entirety of the trials. We experimented with the following
values, which are similar to the values reported in~\citep{cutting_mh_2014}:
\begin{itemize}[noitemsep]
\item $\epsilon \in \{0.001, 0.005, 0.01, 0.05, 0.1, 0.2\}$
\item $m \in \{50, 100, 150, 200, 250, 300, 350, 400, 450, 500\}$
\end{itemize}
and chose the $(m,\epsilon)$ pairing which resulted in the lowest expected data
usage given a selected upper bound on the error. Through personal communication
with~\citet{cutting_mh_2014}, we were able to use their same code to compute
expected data usage and errors.

The main difference between {\sc AustereMH(c)} and {\sc
AustereMH(nc)}\footnote{{\sc AustereMH(nc)} is used in
Section~\ref{ssec:logistic}.}  is that the latter needs to run a grid search
each iteration (i.e. after each time it makes an accept/reject decision for one
sample $\theta_t$). We use the same $\epsilon$ and $m$ candidates above for {\sc
AustereMH(nc)}.

\subsubsection{Gaussian Mixture Model Metrics}\label{sssec:gauss_metrics}

We discretize the posterior coordinates into bins with respect to the two
components of $\theta$.  The probability $P_i$ of a sample falling into bin $i$
is the integral of the true posterior over the bin's area.  A single sample
should therefore be multinomial with distribution $P$, and a set of $n$ (ideally
independent) samples is ${\rm Multinomial}(P,n)$. This distribution is simple
and we can use it to measure the quality of the samples rather than use general
purpose tests like KL-divergence or likelihood-ratio, which are problematic with
zero counts.

For large $n$, the per-bin distributions are approximated by Poissons with
parameter $\lambda_i=P_i n$. Given samples $\{\theta_1,\ldots,\theta_T\}$, let
$c_j$ denote the number of individual samples $\theta_i$ that fall in bin $j$
out of $N_{\rm bins}$ total. We have
\begin{equation}\label{eq:log_prob_poissons}
\log p(c_1, \ldots, c_{N_{\rm bins}} | P_1, \ldots, P_{N_{\rm bins}}) =
\sum_{j=1}^{N_{\rm bins}} c_j \log (n P_j) - n P_j - \log(\Gamma(c_j+1)).
\end{equation}
Table~\ref{tab:poissons} shows the likelihoods. To facilitate interpretation we
perform significance tests using Chi-Squared distribution (also in
Table~\ref{tab:poissons}). The table provides the mean likelihood value and mean
Chi-Squared test statistics value as well as their standard deviations.  Our
likelihood values lies between~\citep{cutting_mh_2014}
and~\citep{icml2014c1_bardenet14}, but we note that we are not aiming to
optimize the likelihood values or the Chi-Squared statistics.  We use these
values to show the extent of correctness.

\subsection{LOGISTIC REGRESSION EXPERIMENT (SECTION~\ref{ssec:logistic})}\label{app:logistic}

\begin{figure*}[t]
	\centering
    \includegraphics[width=0.9\linewidth]{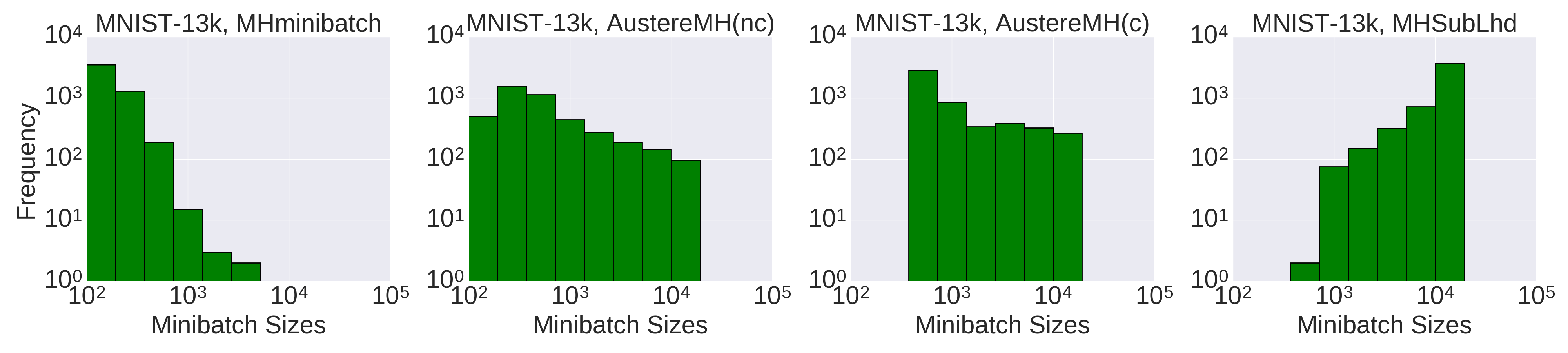}
	\caption{
    Minibatch sizes for a representative trial of logistic regression on
    MNIST-13k (analogous to Figure~\ref{fig:gauss_mix_2}). Both axes are on a
    log scale and have the same ranges across the three histograms. See
    Section~\ref{ssec:logistic} for details.
    }
	\label{fig:logistic_histograms_mnist}
\end{figure*}

\begin{figure*}[t]
	\centering
    \includegraphics[width=0.9\linewidth]{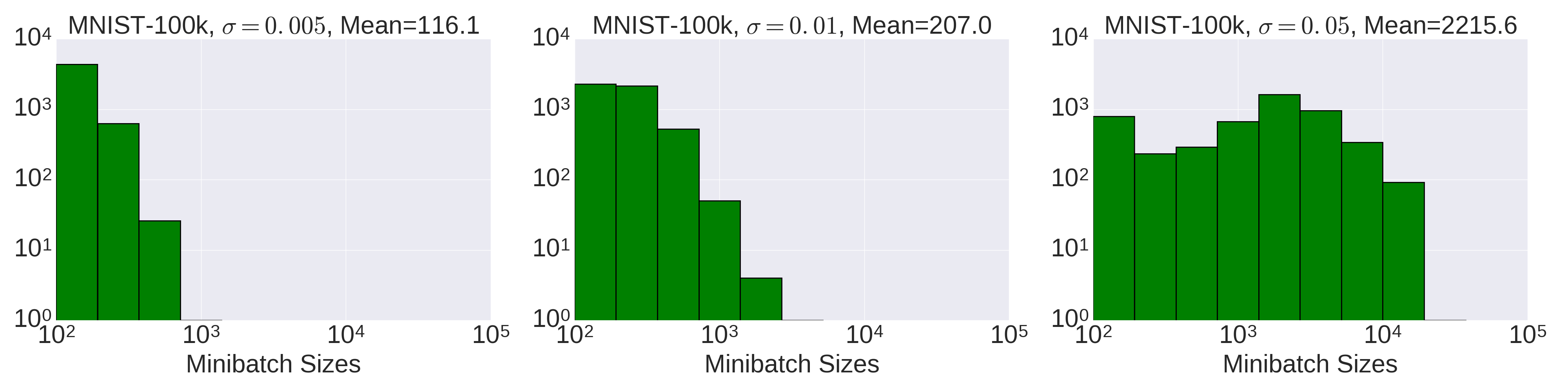}
	\caption{
    Effect of changing the proposal step size $\sigma$ for
    {\sc MHminibatch}.
    }
	\label{fig:logistic_histograms_appendix}
\end{figure*}

\begin{table}[t]
\caption{Parameters for the logistic regression experiments.}
\small
\label{tab:logistic_settings}
\vskip 0.15in
\begin{center}
\begin{tabular}{l l l}
\textbf{Value} & \textbf{MNIST-13k} & \textbf{MNIST-100k} \\
\hline
\T Temperature $K$       & 100 & 100 \\
Number of samples $T$ & 5000 & 3000 \\
Number of trials      & 10 & 5 \\
Step size $\sigma$ for random walk proposer with covariance $\sigma I$ & 0.05 & 0.01 \\
{\sc MHminibatch} and {\sc MHSubLhd} minibatch size $m$ & 100 & 100 \\
{\sc AustereMH(c)} chosen $\Delta^*$ bound & 0.1 & 0.2 \\
{\sc AustereMH(c)} minibatch size $m$ from grid search & 450 & 300 \\
{\sc AustereMH(c)} per-test threshold $\epsilon$ from grid search & 0.01 & 0.01\\
{\sc AustereMH(nc)} chosen $\Delta^*$ bound & 0.05 & 0.1 \\
{\sc MHSubLhd} $\gamma$ & 2.0  & 2.0  \\
{\sc MHSubLhd} $p$      & 2 & 2 \\
{\sc MHSubLhd} $\delta$ & 0.01 & 0.01 \\
\end{tabular}
\end{center}
\vskip -0.1in
\end{table}

Figure~\ref{fig:logistic_histograms_mnist} shows the histograms for the four
methods on one representative trial of MNIST-13k, indicating similar relative
performance of the four methods as in
Figure~\ref{fig:logistic_histograms_mnist8m} (which uses MNIST-100k). In
particular, {\sc MHminibatch} exhibits a shorter-tailed distribution and
consumes nearly an order of magnitude fewer data points compared to {\sc
AustereMH(nc)}, the next-best method; see Table~\ref{tab:logistic} for details.

Next, we investigate the impact of the step size $\sigma$ for the random walk
proposers with covariance matrix $\sigma I$. Note that $I$ is $784\times 784$ as
we did not perform any downsampling or data preprocessing other than rescaling
the pixel values to lie in $[0,1]$.

For this, we use the larger dataset MNIST-100k, and test with $\sigma \in
\{0.005, 0.01, 0.05\}$. We keep other parameters consistent with the experiments
in Section~\ref{ssec:logistic}, in particular, keeping the initial minibatch
size $m=100$, which is also the amount the minibatch increments by if we need
more data. Figure~\ref{fig:logistic_histograms_appendix} indicates minibatch
histograms (again, using the log-log scale) for one trial of {\sc MHminibatch}
using each of the step sizes. We observe that by tuning {\sc MHminibatch}, we
are able to adjust the average number of data points in a minibatch across a
wide range of values. Here, the smallest step size results in an average of just
116.1 data points per minibatch, while increasing to $\sigma=0.05$ (the step
size used for MNIST-13k) results in an average of 2215.6. This relative trend is
also present for both {\sc AustereMH} variants and {\sc MHSubLhd}.

Table~\ref{tab:logistic_settings} indicates the relevant parameter settings for
the logistic regression experiments. Unless otherwise stated, values apply to
all methods tested. For values from~\citep{cutting_mh_2014}
or~\citep{icml2014c1_bardenet14}, we use their notation ($\Delta^*, m, \epsilon,
\gamma, p$, and $\delta$) to be consistent.

\end{document}